\title{Simultaneously Updating All Persistence Values in Reinforcement Learning}
\author{
    %Authors
    % All authors must be in the same font size and format.\
    Luca Sabbioni\textsuperscript{\rm 1}, Luca Al Daire\textsuperscript{\rm 1}, Lorenzo Bisi\textsuperscript{\rm 2}, Alberto Maria Metelli\textsuperscript{\rm 1}, Marcello Restelli\textsuperscript{\rm 1}
}
\title{My Publication Title --- Single Author}
\author {
    Author Name
}
\title{My Publication Title --- Multiple Authors}
\author {
    % Authors
    First Author Name,\textsuperscript{\rm 1}
    Second Author Name, \textsuperscript{\rm 2}
    Third Author Name \textsuperscript{\rm 1}
}
\begin{document}
 \setlength{\parskip}{1.3mm plus2mm minus2mm}
 
\maketitle

\begin{abstract}
In \emph{reinforcement learning}, the performance of learning agents is highly sensitive to the choice of time discretization. Agents acting at high frequencies have the best control opportunities, along with some drawbacks, such as possible inefficient exploration and vanishing of the action advantages. The repetition of the actions, i.e., \textit{action persistence}, comes into help, as it allows the agent to visit wider regions of the state space and improve the estimation of the action effects.
In this work, we derive a novel \textit{All-Persistence Bellman Operator}, which allows an effective use of both the low-persistence experience, by decomposition into sub-transition, and the high-persistence experience, thanks to the introduction of a suitable \emph{bootstrap} procedure. 
In this way, we employ transitions collected at \emph{any} time scale to update simultaneously the action values of the considered persistence set.
We prove the contraction property of the All-Persistence Bellman Operator and, based on it, we extend classic Q-learning and DQN. After providing a study on the effects of persistence, we experimentally evaluate our approach in both tabular contexts and more challenging frameworks, including some Atari games.
\end{abstract}

\section{Introduction}
In recent years, Reinforcement Learning~\citep[RL,][]{sutton2018reinforcement} methods have proven to be successful in a wide variety of applications, including robotics~\citep{kober2013learning,gu2017deep,haarnoja2019learning, kilinc2019reinforcement}, autonomous driving~\citep{kiran2021deep} and continuous control~\citep{lillicrap2015continuous, schulman2017proximal}. These sequential decision-making problems are typically modelled as a Markov Decision Process~\citep[MDP,][]{puterman2014markov}, a formalism that addresses the agent-environment interactions through \emph{discrete-time} transitions.  \emph{Continuous-time} control problems, instead, are usually addressed by means of time discretization, which induces a specific control frequency $f$, or, equivalently, a time step $\delta=\frac{1}{f}$~\citep{park2021time}. This represents an environment hyperparameter, which may have dramatic effects on the process of learning the optimal policy~\citep{metelli2020control,kalyanakrishnan2021analysis}. Indeed, higher frequencies allow for greater control opportunities, but they have significant drawbacks. The most relevant one is related to the toned down effect of the selected actions. In the limit for time discretization  $\delta\rightarrow 0$, the advantage of each action collapses to zero, preventing the agent from finding the best action~\citep{baird1994reinforcement, tallec2019time}. Even policy gradient methods are shown to fail in this (near-)continuous-time setting, and the reason is related to the divergent variance of the gradient~\citep{park2021time}. 
The consequences of each action might be detected if the dynamics of the environment has the time to evolve, hence with an agent acting with higher frequencies lead to higher sample complexity. 

Another consequence of the use of high frequencies is related to the difficulty of exploration. A random uniform policy played at high frequency may not be adequate, as in some classes of environments, including the majority of real-world control problems, it tends to visit only a local neighborhood of the initial state~\citep{amin2020locally, park2021time, yu2021taac}. This is problematic, especially in goal-based or sparse rewards environments, where the most informative states may never be visited. On the other hand, large time discretizations benefit from a higher probability of reaching far states, but they also deeply modify the transition process, hence a possibly large subspace of states may not be reachable. 

One of the solutions to achieve the advantages related to exploration and sample complexity, while keeping the control opportunity loss bounded, consists in \textit{action persistence} or \textit{action repetition} \cite{schoknecht2003reinforcement, braylan2015frame, lakshminarayanan2017dynamic, metelli2020control}, which is equivalent to acting at lower frequencies. 
Thus, the agent can achieve, in some environments, a more effective exploration,
better capture the consequences of each action, and fasten convergence to the optimal policy.

In this work, we propose a value-based approach in which the agent does not only choose the \emph{action}, but also its \emph{persistence}, with the goal of making the most effective use of samples collected at different persistences.
The main contribution of this paper is a general approach in which information collected from the interaction with the environment at \emph{one} persistence is used to improve the action value function estimates of \emph{all} the considered possible persistences. On one hand, the $\overline{\kappa}$-step transitions can be decomposed in many sub-transitions of reduced length and used to update lower persistence $k \le \overline{\kappa}$ value functions.  On the other hand, they represent partial information for the estimation of the effects of higher persistence $k> \overline{\kappa}$ actions. Indeed, they can be employed to update the estimates  by using a suitable \textit{bootstrapping} procedure of the missing information.
This means that, after the interaction with the environment, according to the action-persistence pair selected by the agent, all value function estimates are updated simultaneously for each of the available persistences $k \in \mathcal{K}$. We formalize, in Section \ref{sec:update}, this procedure by introducing the \emph{All-persistence Bellman Operator}. We prove that such an  operator enjoys a contraction property analogous to that of the traditional optimal Bellman operator.
Consequently, in Section \ref{sec:perq} we embed the All-persistence Bellman operator into the classic Q-learning algorithm, obtaining \emph{\algnameExtQL} (\algnameQL). This novel algorithm, through an effective use of the transitions sampled at different persistences, displays two main advantages. First, since each individual transition is employed to update the value function estimates at different persistences, we experience a faster convergence. Second, the execution of persistent actions, given the nature of a large class of environments, fosters exploration of the state space, with a direct effect on the learning speed (Section \ref{sec:advantages}).
Furthermore, to deal with more complex domains, we move, in Section \ref{sec:perdq}, to the Deep RL scenario, extending the Deep Q-Network (DQN) algorithm to its persistent version, called \emph{\algnameExtDQN} (\algnameDQN). Finally, in Section \ref{sec:exps}, we evaluate the proposed algorithms, in comparison with state-of-the-art approaches, on illustrative and complex domains, highlighting strengths and weaknesses.

\section{Related Work}
The first work extending RL agents with action repetition, go back to \citealt{schoknecht2003reinforcement}. In this paper,  multi-step actions (MSAs) were introduced, reducing the number of decisions needed to reach the goal and making the time scale coarser. 
Action persistence has acquired practical relevance since the introduction of Deep RL~\cite{mnih2013playing}, by leveraging the
 \textit{frame skip} parameter \cite{bellemare2013arcade}.
Several works \cite{braylan2015frame, khan2019optimal,metelli2020control} had shown the importance of persistence for helping exploration and policy learning. Among these works, \citealt{dabney2020temporally} introduced an $\epsilon z-$greedy exploration, with a random exploratory variable deciding the duration of each action.
As explained in \citealt{metelli2020control}, changing frequency deeply modifies the underlying MDP, as a special instance of a configurable MDP \cite{metelli2018configurable}, where environmental parameters can be tuned to improve the performance. Indeed, in \citealt{grigsby2021towards} the authors proposed an algorithm to automatically tune the control frequency, along with other learning hyperparameters. \citealt{mann2015approximate} illustrates that approximate value iteration techniques can converge faster with action persistence (seen as \emph{options} with longer duration).

Action repetition has many advantages, but it may reduce the control opportunities. Consequently, researchers have been trying to include the possibility to \emph{dynamically} change the control frequency during learning: in Augmented-DQN  \cite{lakshminarayanan2017dynamic}, the action space is duplicated to be able to choose actions with two previously selected repetition rates. 

A different approach is proposed in \citealt{sharma2017learning}, where a \textit{skip network} is used to the action persistence in a specific state, regardless of the chosen action.
One way to differentiate persistences with actions is proposed by TempoRL \cite{Biedenkapp2021TempoRL}, where the skip network depends on both state and action (and the estimated Q-value function) to evaluate the effects for different possible frequencies. In \citealt{bellemare2016increasing},  \emph{persistent advantage learning} is proposed in which the advantage learning is overridden by a Q-learning update with repeated actions, when the latter promises better Q-values.

In the framework of policy-gradient methods, persistence is introduced in \citealt{yu2021taac}, with the introduction of a secondary policy for choosing whether to repeat the previous action or to change it according to the principal agent. 
 A completely different approach is presented by \citealt{park2021time}: the authors claim that when $\delta \to 0$ policy-based methods tend to degrade. Thanks to the introduction of a \emph{safe region}, the agent keeps repeating an action until the distance of the visited states overcomes a certain threshold. This state locality can guarantee reactivity, especially in some environments where the blind repetition of an action can be dangerous.
\section{Preliminaries}
In this section, we provide the necessary background employed in the following of the paper.

\paragraph{Mathematical Background}
Given a measurable space $(\mathcal{X},\sigma_{\mathcal{X}})$, where $\mathcal{X}$ is a set and  $\sigma_{\mathcal{X}}$ a $\sigma$-algebra, we denote with $\mathscr{P}(\mathcal{X})$ the set of probability measures and with $\mathscr{B}(\mathcal{X})$ the set of the bounded measurable functions. We denote with $\delta_{x}$ the Dirac measure centered in $x \in \mathcal{X}$. Let $f \in \mathscr{B}(\mathcal{X})$, the $L_{\infty}$-norm is defined as $\norm[\infty]{f} = \sup_{x \in \mathcal{X}} f(x)$.\newline\vspace*{-2mm}

\paragraph{Markov Decision Processes}
A discrete-time Markov Decision Process~\citep[MDP,][]{puterman2014markov} is defined as a tuple $\mathcal{M}\coloneqq\langle\Sspace,\Aspace, P, r, \gamma\rangle$, where $\Sspace$ is the  state space, $\Aspace$ the finite action space, $P: \Ss \times \As \rightarrow \mathscr{P}(\Ss)$ is the Markovian transition kernel,  
$r:\Ss \times \As \rightarrow \mathbb{R}$ is the reward function, bounded as $\|r\|_{\infty}\le \Rmax<+\infty$, and $\gamma\in[0,1)$ is the discount factor. 
A Markovian stationary policy $\pi: \Sspace\rightarrow\mathscr{P}(\Aspace)$ maps states to probability measures over $\mathcal{A}$. We denote with $\Pi$ the set of  Markovian stationary policies. The \emph{action-value function}, or $Q$-function, of a policy $\pi \in \MSPol$ is the expected discounted sum of the rewards obtained by
performing action $a$ in state $s$ and following policy $\pi$ thereafter:
$Q^{\pi}(s,a) = \EV_{\pi} \big[ \sum_{t=0}^{+\infty} \gamma^t r_{t+1} \rvert s_0=s,\, a_0=a\big]$,
where $r_{t+1} = r(s_t,a_t)$, $a_{t} \sim \pi(\cdot|s_{t})$, and $s_{t+1} \sim P(\cdot|s_{t},a_t)$ for all $t \in \Naturals$.
The optimal $Q$-function is given by: $Q^\star(s,a) = \sup_{\pi \in \Pi} Q^\pi(s,a)$ for all $(s,a) \in \SAs$. 
A policy $\pi$ is \emph{greedy} w.r.t. a function $f \in \mathscr{B}(\SAs)$ if it plays only greedy actions, i.e., $\pi(\cdot|s) \in \mathscr{P} \left( \argmax_{a \in \Aspace} f(s,a) \right)$.
An \emph{optimal policy} $\pi^\star \in \Pi$ is any policy greedy w.r.t. $Q^\star$.

\paragraph{$Q$-learning}
The \emph{Bellman Optimal Operator} 
$T^{\star}:\mathscr{B}(\SAs)\rightarrow\mathscr{B}(\SAs)$ is defined for every $f\in\mathscr{B}(\SAs)$ and $(s,a) \in \SAs$ as in~\citep{bertsekas2004stochastic}: $(T^{\star} f)(s,a) = r(s,a) + \gamma \int_{\Ss} P(\de s'|s,a) \max_{a' \in \As} f(s',a').$
$T^{\star}$ is a $\gamma$-contraction in $L_{\infty}$-norm and its unique fixed point is the optimal $Q$-function.
When $P$ and $r$ are known, value-iteration \cite{puterman2014markov} allows computing $Q^\star$ via iterative application of $T^{\star}$. When the environment is unknown, $Q$-learning \cite{watkins1989learning} collects samples with a \textit{behavioral} policy (e.g., $\epsilon$-greedy) and then updates Q-function estimate based on the updated rule: $Q(s_t,a_t) \leftarrow (1-\alpha) Q(s_t,a_t) + \alpha (r_{t+1} + \gamma \max_{a' \in \Aspace} Q(s_{t+1}, a'))$. where $\alpha > 0$ is the learning rate. 

\paragraph{Deep $Q$-Networks}
Deep $Q$-Network (DQN, \citealt{mnih2013playing, mnih2015human}) employs a deep neural network  with weights $\vtheta$ to learn an approximation $Q_{\vtheta}$ of $Q^\star$. The transitions are stored in the \emph{replay buffer} $\mathcal{D}=\{(s_t,a_t,r_{t+1}, s_{t+1})\}_{t=1}^n$ to mitigate temporal correlations. To improve stability, a \emph{target network}, whose parameters $\vtheta^{-}$ are kept fixed for a certain number of steps, is employed. The $Q$-Network is trained to minimize the mean squared temporal difference error $r + \gamma \max_{a' \in \mathcal{A}} Q_{{\vtheta}^{-}}(s',a')- Q_{\vtheta}(s,a)$ on a batch of tuples sampled from the replay buffer $(s,a,r,s') \sim \mathcal{D}$.

\paragraph{Action Persistence}
The execution of actions with a persistence $k \in \Naturals$ can be modeled by means of the $k$-persistent MDP~\cite{metelli2020control}, characterized by the $k$-persistent transition model $P_k$ and reward function $r_k$. To formally define them, the \emph{persistent transition model} is introduced: $P^{\delta} (\cdot,\cdot|s,a) = \int_{\Ss} P(\de s'| s,a) \delta_{(s',a)}(\cdot,\cdot)$, which replicates in the next state $s'$ the previous action $a$. Thus, we have $P_k(\cdot|s,a) = \left((P^{\delta})^{k-1}P\right)(\cdot|s,a)$ and $r_k(s,a) = \sum_{i=0}^{k-1} \gamma^i \left((P^{\delta})^{i} r\right)(s,a)$.
This framework eases the analysis of \textit{fixed} persistences, but it does not allow action repetition for a \textit{variable} number of steps.
\begin{algorithm}[t]
\medmuskip=0mu
\thinmuskip=0mu
\thickmuskip=0mu
\caption{All Persistence Bellman Update} \label{alg:pers_update}
\small
\begin{algorithmic}[1]
\REQUIRE Sampling persistence $\ksamp_t$, partial history $H_{t}^{\ksamp_t}$,\\ Q-function $Q$.%, step $t$.
\ENSURE Updated $Q$-function $Q$
\FOR{$j=\ksamp_t, \ksamp_t-1 \dots, 1$}
\FOR{$i=j-1, j-2,\dots, 0$}
\STATE{$k \leftarrow j-i$}
\STATE{\textcolor{vibrantMagenta}{$Q(s_{t+i}, a_t, k) \leftarrow (1-\alpha)Q(s_{t+i}, a_t, k) +$}}
\STATE{\textcolor{vibrantMagenta}{$\quad \quad \quad \quad \quad \quad \quad \quad \alpha\widehat{T}^{\star}_{t+i} Q(s_{t+i}, a_t, k)$}}
\FOR{$d=1, 2,\dots, K_{\max}-k$}
\STATE{\textcolor{vibrantBlue}{$Q(s_{t+i}, a_t, k+d) \leftarrow (1-\alpha)Q(s_{t+i}, a_t, k+d) +$}} % bootstrap\_update(t+i, a, k, d)
\STATE{\textcolor{vibrantBlue}{$\quad \quad \quad \quad \quad \quad \quad \quad \alpha{\widehat{T}}^{k}_{t+i} Q(s_{t+i}, a_t, k+d)$}}
\ENDFOR
\ENDFOR
\ENDFOR
\end{algorithmic}
\end{algorithm}

\begin{algorithm}[t]
\caption{\algnameExtQL (\algnameQL)} \label{alg:PerQ}
\begin{algorithmic}[1]
\small
\REQUIRE Learning rate $\alpha$, exploration coefficient $\epsilon$,\\ number of episodes $N$
\ENSURE $Q$-function
\STATE{Initialize $Q$ arbitrarily, %, except that\\
$Q(terminal, \cdot, \cdot) = 0$}
\FOR{$episode=1, \dots, N$}
\STATE{$t \leftarrow 0$}
\WHILE{$s_t$ is not $terminal$}
\STATE{$a_t, \ksamp_t \ \sim \ \psi^{\epsilon}_Q(s_t)$}
\FOR{$\tau=1,\dots,\ksamp_t$}
\STATE{Take action $a_t$, observe $s_{t+\tau}, r_{t+\tau}$}
\ENDFOR
\STATE{Store partial history $H_{t}^{\ksamp_{t}}$}
\STATE{Update $Q$ according to Alg.\ref{alg:pers_update}}
\STATE{$t \leftarrow t+\ksamp_t$}
\ENDWHILE
\ENDFOR
\end{algorithmic}
\end{algorithm}

\section{All-Persistence Bellman Update}\label{sec:update}
In this section, we introduce our approach to make effective use of the samples collected at \emph{any} persistence. We first introduce the notion of \emph{persistence option} and then, we present the \emph{all-persistence Bellman operator}.

\subsection{Persistence Options}\label{sec:persistenceOptions}\vspace*{-.1cm}
We formalize the decision process in which the agent chooses a primitive action $a$ together with its persistence $k$. To this purpose, we introduce the \emph{persistence option}.

\begin{defi}\label{defi:persistenceOption}
Let  $\Aspace$ be the space of \emph{primitive} actions of an MDP $\mathcal{M}$ and $\Kspace \coloneqq \{1, \dots, K_{\max}\}$, where $K_{\max} \ge 1$, be the set of persistences. A \emph{persistence option} $o \coloneqq (a,k) $ is the decision of playing primitive action $a \in \Aspace$ with persistence $k \in \Kspace$. We denote with $\mathcal{O}^{(k)} \coloneqq \{(a,k) \,:\, a \in \Aspace\}$ the set of options with fixed persistence $k \in \Kspace$ and $\mathcal{O} \coloneqq \bigcup_{k\in \mathcal{K}}\Ospace^{(k)} = \Aspace \times \Kspace$.
\end{defi}
The decision process works as follows. At time $t=0$, the agent observes $s_0 \in \Sspace$, selects a persistence option $o_0 = (a_0,k_0) \in \mathcal{O}$, observes the sequence of states $(s_1,\dots, s_{k_0})$ generated by repeating primitive action $a_0$ for $k_0$ times, \ie $s_{i+1} \sim P(\cdot|s_{i},a_0)$ for $i \in \{0,\dots,k_0-1\}$, and the sequence of rewards $(r_1,\dots,r_{k_0})$ with $r_{i+1} = r(s_i,a_0)$ for $i \in \{0,\dots,k_0-1\}$.
Then, in state $s_{k_0}$ the agent selects another option $o_1 = (a_1,k_1) \in \mathcal{O}$ and the process repeats.
During the execution of the persistence option, the agent is not allowed to change the primitive action.\footnote{From this definition, it follows that $\Aspace$ is isomorphic to $\mathcal{O}^{(1)}$.}

\begin{remark} \textup{(\textbf{Persistence and Options})}~
The persistence option (Definition~\ref{defi:persistenceOption}) is in all regards a \emph{semi-Markov option}~\cite{precup2001temporal}, where the initiation set is the set of all states $\Ss$, the termination condition depends on time only, and the intra-option policy is constant. Indeed, the described process generates a \emph{semi-Markov decision process} \citep{puterman2014markov}, fully determined by the behavior of $\mathcal{M}$, as shown in \cite{sutton1999between}.
\end{remark}
\begin{figure}[t]
\centering
\includegraphics[width=.8\columnwidth]{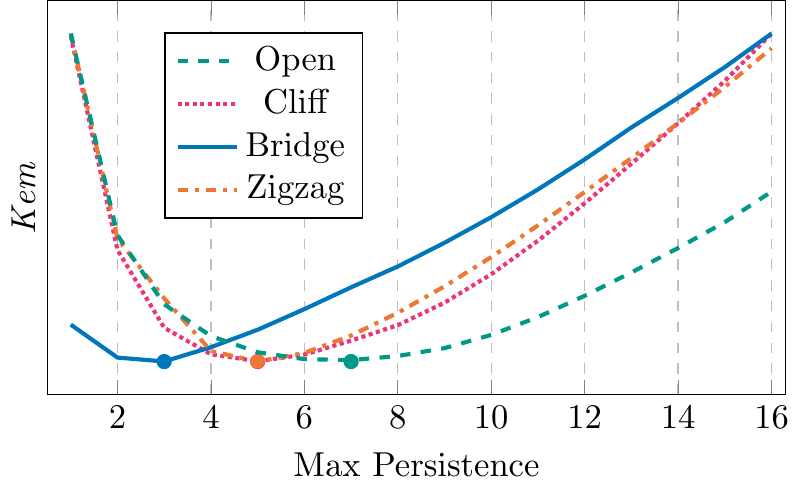}
\caption{Normalized Kemeny's constant in tabular environments as function of $K_{\max}$. Bullets represent the minimum.}\label{fig:Kemeny}
\end{figure}

\begin{remark} \textup{(\textbf{Persistence Options vs Augmented Action Space})}~
There is an important difference between using persistence options $\mathcal{O}$ in the original MDP $\mathcal{M}$ and defining an augmented MDP $\mathcal{M}_{\Kspace}$ with new action space $\Aspace \times \Kspace$ and properly redefined transition model and reward function~\cite{lakshminarayanan2017dynamic}: when executing a persistence option $o_t=(a_t,k_t) \in \mathcal{O}$ at time $t$, we observe the \emph{full} sequence of states $(s_{t+1}, \dots,s_{t+k_t} )$ and rewards $(r_{t+1},\dots,r_{t+k_t})$. Instead, in the augmented MDP $\mathcal{M}_{\Kspace}$ we \emph{only} observe the last state $s_{k_t}$ and the cumulative reward $r_{t+1}^k = \sum_{i=0}^{k-1} \gamma^i r_{t+i+1}$.  We will heavily exploit the particular option structure, re-using fragments of experience to perform \textit{intra-option learning}.
\end{remark}

We now extend the policy and state-action value function definitions to consider this particular form of options. A \emph{Markovian stationary policy over persistence options} $\psi : \Ss \rightarrow \mathscr{P}(\Ospace)$ is a mapping between states and probability measures over persistence options. We denote with $\Psi$ the set of the policies of this nature.
%belonging to the set $\pi \in \Psi=\{\pi:\Sspace\rightarrow\mathcal{P}(\mathcal{O})\}$.
The state-option value function $Q^{\psi}: \Sspace \times \mathcal{O} \rightarrow \mathbb{R}$ following a policy over options $\psi \in \Psi$  is defined as $Q^{\psi}(s,a,k) \coloneqq \EV_{\psi} \big[ \sum_{t=0}^{+\infty} \gamma^t r_{t+1} \rvert s_0=s,\, a_0=a, k_0=k\big]$.
In this context, the optimal action-value function is defined as: $\QstarK(s,a,k)=\sup_{\psi\in\Psi}Q^\psi(s,a,k) $.

\subsection{All-Persistence Bellman Operator}\label{sec:allPersistenceOp}
Our goal is to leverage any $\ksamp$-persistence transition to learn $\QstarK(\cdot,\cdot,k)$ for \textit{all} the possible action-persistences in $k \in \Kspace$. Suppose that $\ksamp \ge k$, then, we can exploit any sub-transition of $k$ steps from the $\ksamp$-persistence transition to update the value $\QstarK(\cdot,\cdot,k)$.
%\sab{non solo i primi k step, ma tutte le sotto sequenze di lunghezza k}
Thus, we extend the Bellman optimal operator to persistence options $\TstarK: \mathscr{B}(\Sspace \times \mathcal{O}) \rightarrow \mathscr{B}(\Sspace \times \mathcal{O})$ with $f \in \mathscr{B}(\Sspace \times \mathcal{O})$:
{
\medmuskip=0mu
\thinmuskip=0mu
\thickmuskip=0mu
\begin{equation*}
    \left(\TstarK f\right) (s,a,k)=r_{k}(s,a) + \gamma^{k}\int_{\Sspace} P_{k}(\mathrm{d}s'|s,a)\hspace{0.2mm}\max_{(a',k')\in\mathcal{O}}f(s',a',k').
\end{equation*}
}

If, instead, $\ksamp < k$, in order to update the value $\QstarK(\cdot,\cdot,k)$, we partially exploit the $\ksamp$-persistent transition, but then, we need to \textit{bootstrap} from a lower persistence $Q$-value, to compensate the remaining $k-\ksamp$ steps. To this end, we introduce the \emph{bootstrapping} operator \newline $T^{\ksamp}: \mathscr{B}(\Sspace \times \mathcal{O}) \rightarrow \mathscr{B}(\Sspace \times \mathcal{O})$ with $f \in \mathscr{B}(\Sspace \times \mathcal{O})$:
{
\medmuskip=0mu
\thinmuskip=0mu
\thickmuskip=0mu
\begin{equation*}
    \left(T^{\ksamp}f\right) (s,a,k) = r_{\ksamp}(s,a) + \gamma^{\ksamp}\int_{\Sspace} P_{\ksamp}(\mathrm{d}s'|s,a) f(s',a,k-\ksamp).
\end{equation*}
}

By combining these two operators, we obtain  the \textit{All-Persistence Bellman operator} $\mathcal{H_{\ksamp}}: \mathscr{B}(\Sspace \times \mathcal{O}) \rightarrow \mathscr{B}(\Sspace \times \mathcal{O})$ defined for every $f \in \mathscr{B}(\Sspace \times \mathcal{O})$ as: $(\Hop^{\ksamp} f) (s,a,k) = \left((\mathbbm{1}_{k\le\ksamp}\TstarK+
    \mathbbm{1}_{k>\ksamp}T^{\ksamp})f\right)(s,a,k)$.
Thus, given a persistence $\ksamp \in \Kspace$, $\Hop^{\ksamp}$ allows updating all the $Q$-values with $k \leq \ksamp$ by means of $\TstarK$, and all the ones with $k > \ksamp$ by means of $T^{\ksamp}$. The following result demonstrates its soundness.

\begin{restatable}[%contraction of $\Hop^{\ksamp}$-operator
]{thm}{HQcontraction}\label{prop:HQ_contraction}
The all-persistence Bellman operator $\mathcal{H}^{\ksamp}$ fulfills the following properties:
\begin{enumerate}
    \item $\mathcal{H}^{\ksamp}$ is a $\gamma$-contraction in $L_\infty$ norm;
    \item $\QstarK$ is its unique fixed point;
    \item $\QstarK$ is monotonic in $k$, \ie  for all $(s,a) \in \SAs$ if $k \le k'$ then $\QstarK(s,a,k) \ge \QstarK(s,a,k')$.
\end{enumerate}
\end{restatable}

 Thus, operator $\mathcal{H}^{\ksamp}$ contracts to the optimal action-value function $\QstarK$, which, thanks to monotonicity, has its highest value at the lowest possible persistence. In particular, it is simple to show that $\QstarK(s,a,1) = Q^\star(s,a)$ for all $(s,a) \in \SAs$ (Corollary~\ref{cor:apx_Qstar_equiv}), \ie by fixing the persistence to $k=1$ we retrieve the optimal $Q$-function in the original MDP, and consequently, we can reconstruct a greedy optimal policy. This highlights that the primitive action space leads to the same optimal Q-function as with persistence options. Persistence, nevertheless, is helpful for exploration and learning, but for an optimal persistent policy $\psi^*$, there exists a primitive policy $\pi^*$ with the same performance.

\begin{figure}[t]
\centering
\includegraphics[width=\columnwidth]{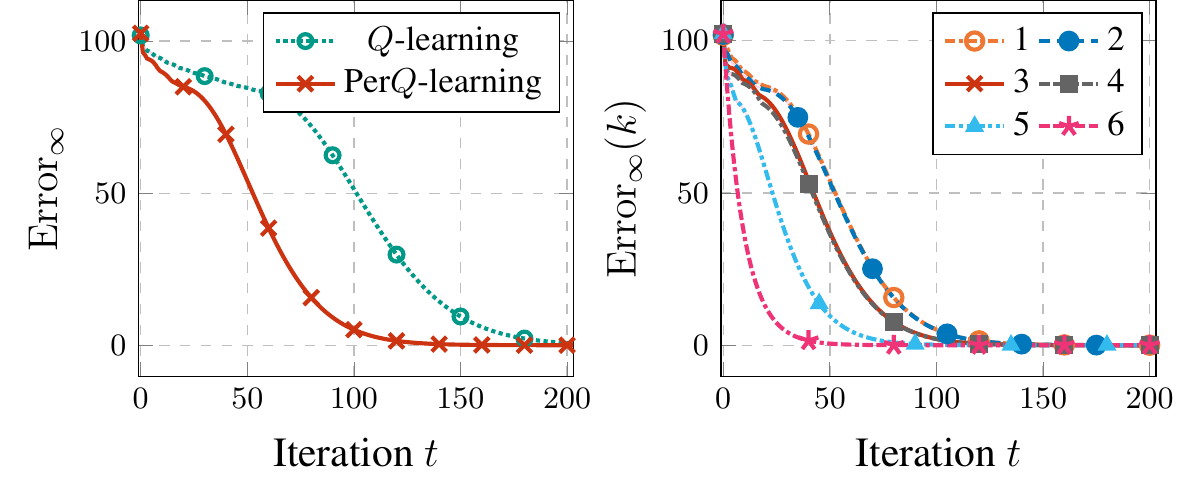}
\caption{$L_{\infty}$ error on 6x6 grid-world between synchronous $Q$-learning and \algnameQL (left) and for different persistence options $k \in\{1,...,6\}$ of \algnameQL (right). (100 runs, avg $\pm$ 95 \% c.i.)}\label{fig:sync}
\vspace*{-2mm}
\end{figure}

\section{Persistent $Q$-learning}\label{sec:perq}
It may not be immediately clear what are the advantages of $\mathcal{H}^{\ksamp}$ over traditional updates. These become apparent with its empirical counterpart $\widehat{\mathcal{H}}_t^{\ksamp} = \mathbbm{1}_{k\le\ksamp}\widehat{T}_t^\star+\mathbbm{1}_{k>\ksamp}\widehat{T}^{\ksamp}_t$, where:
\begin{align*}
\left(\widehat{T}^{\star}_{t} Q\right) (s_t,a_t,k)&=
%\sum_{\tau=0}^{k-1}\gamma^{\tau}r_{t+\tau +1}
r_{t+1}^{k}
+ \gamma^{k}\hspace{0.2mm}\max_{(a',k')\in\mathcal{O}}Q(s_{t+k},a',k'),\\
\left(\widehat{T}^{\ksamp}_{t} Q\right) (s_t,a_t,k) &= 
r_{t+1}^{\ksamp}
%\sum_{\tau=0}^{k-1}\gamma^{\tau}r_{t+\tau +1} 
+ \gamma^{\ksamp}Q(s_{t+k},a_t,k-\ksamp).
\end{align*}
These empirical operators depend on the current \textit{partial history}, which we define as: $H_t^{\ksamp}\coloneqq (s_t, a_t, r_{t+1}, s_{t+1}, r_{t+2},\dots, s_{t+\ksamp})$, used by Algorithm \ref{alg:pers_update} to update each persistence in a backward fashion, as illustrated also in Appendix \ref{sec:apx scheme}. At timestep $t$, given a sampling persistence $\ksamp_t$, for all sub-transitions of $H_t^{\ksamp}$, starting at $t+i$ and ending in $t+j$, we apply $\widehat{\mathcal{H}}_t^{j-i}$ to $Q(s_{t+i}, a_t, k+d)$, for all $d \leq K_{\max}-k$, where $k=j-i$.

With these tools, it is possible to extend $Q$-learning \cite{watkins1989learning} to obtain the Persistent $Q$-learning algorithm (abbreviated as Per$Q$-learning), described in Algorithm \ref{alg:PerQ}. The agent follows a policy $\psi^{\epsilon}_Q$, which is $\epsilon$-greedy w.r.t. the option space and the current $Q$-function.

This approach extends the MSA-$Q$-learning \cite{schoknecht2003reinforcement}, by bootstrapping higher persistence action values from lower ones. More precisely, both methods apply the update related to  $\widehat{T}^\star$, but MSA-$Q$-learning does not use $\widehat{T}^{\ksamp}$ instead. As shown in the empirical analysis, in some domains this difference can be crucial to speed up the convergence. Similarly to MSA-$Q$-learning, we perform backwards updates to allow for an even faster propagation of values. The proposed approach also differs from TempoRL $Q$-learning \cite{Biedenkapp2021TempoRL}, where action-persistence is selected using a dedicated value-function, learned separately from the $Q$-function.
The asymptotic convergence of \algnameExtQL to $\QstarK$ directly follows \cite{singh2000convergence}, being $\mathcal{H}^{\ksamp}$ a contraction and since their (mild) assumptions are satisfied.

\begin{algorithm}[t]
\caption{Multiple Replay Buffer Storing} \label{alg:split_traj}
\begin{algorithmic}[1]
\small
\REQUIRE Maximum persistence $K_{\max}$, replay buffers $(\mathcal{D}_k)_{k=1}^{K_{\max}}$, transition tuple $(s_t,a_t, \overline{\kappa}_t, H_t^{\overline{\kappa}_t})$.
\vspace{0.1cm}
\FOR{$k=1, \dots, K_{\max}$}
%\STATE{$\tilde{\tau}_i \leftarrow ()$}
\FOR{$\tau=0,  \dots, \max\{\overline{\kappa}_t-k,0\}$}
    \STATE{$\mathcal{D}_k \leftarrow \mathcal{D}_k \cup (s_{t+\tau},a_t, s_{t+\tau+k},  r_{t+1+\tau}^k, k)$}
\ENDFOR
\FOR{$\tau=1,  \dots, \min\{\overline{\kappa}_t,k-1\}$}
    \STATE{$\mathcal{D}_k \leftarrow \mathcal{D}_k \cup (s_{t+\overline{\kappa}_t-\tau},a_t, s_{t+\overline{\kappa}_t}, r_{t+1+\overline{\kappa}_t-\tau}^{\tau}, \tau)$}
\ENDFOR
\ENDFOR
\end{algorithmic}
\end{algorithm}

\section{Empirical Advantages of Persistence}\label{sec:advantages}
In this section, we provide some numerical simulations to highlight the benefits of our approach. The settings are illustrative, to ease the detection of the individual advantages of persistence, before presenting more complex applications.

\paragraph{Exploration}
One of the main advantages of persistence is related to faster exploration, especially in goal-based environments (e.g., robotics and locomotion tasks). Indeed, persisting an action allows reaching faster states far from the starting point and, consequently, propagating faster the reward. The reason is due to the increased chances of $1$-persistent policies to get stuck in specific regions. As explained in \cite{amin2020locally}, persistence helps to achieve \textit{self-avoiding} trajectories, by increasing the expected return time in previously visited states. Hence, we study the effects of a persisted exploratory policy on the MDP, \ie a policy $\psi \in \Psi$ over persistence options $\mathcal{O}$ (details in Appendix \ref{sec:apx kemeny}). 

To this purpose, we compute the \textit{Kemeny's constant} \cite{catral2010kemeny,patel2015robotic}, which corresponds to the expected first passage time from an arbitrary starting state $s$ to another one $s'$ under the stationary distribution induced by $\psi$. We consider four discrete tabular environments: \textit{Open} is a 10x10 grid with no obstacles, while the others, presented in \cite{Biedenkapp2021TempoRL}, are depicted in Figure \ref{fig:tabular}. In Figure \ref{fig:Kemeny}, we plot the variations of Kemeny's constant as a function of the maximum persistence $K_{\max}$, while following a uniform policy $\psi$ over $\mathcal{O}$. We observe that increasing $K_{\max}$ promotes exploration, and highlights the different $K_{\max}$ attaining the minimum value of the constant, due to the different complexity of the environments.

\begin{figure}[t]
\includegraphics[width=\columnwidth]{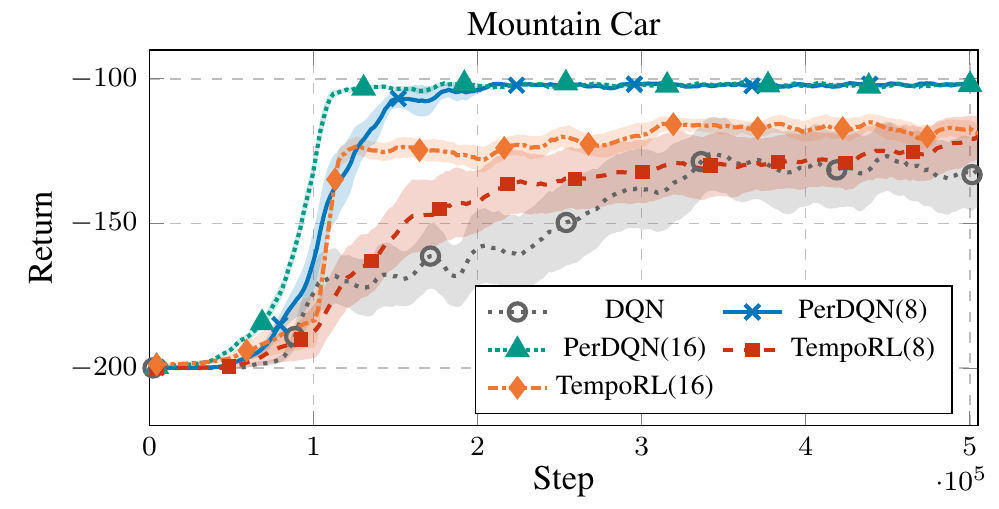}
\vspace*{-2mm}\caption{MountainCar results. Parenthesis in the legend denote $K_{\max}$. 20 runs (avg$\pm$ 95\% c.i.). }\label{fig:mtcar}\vspace*{-2mm}
\end{figure}
\begin{figure*}
\centering
\includegraphics[width=.9\textwidth]{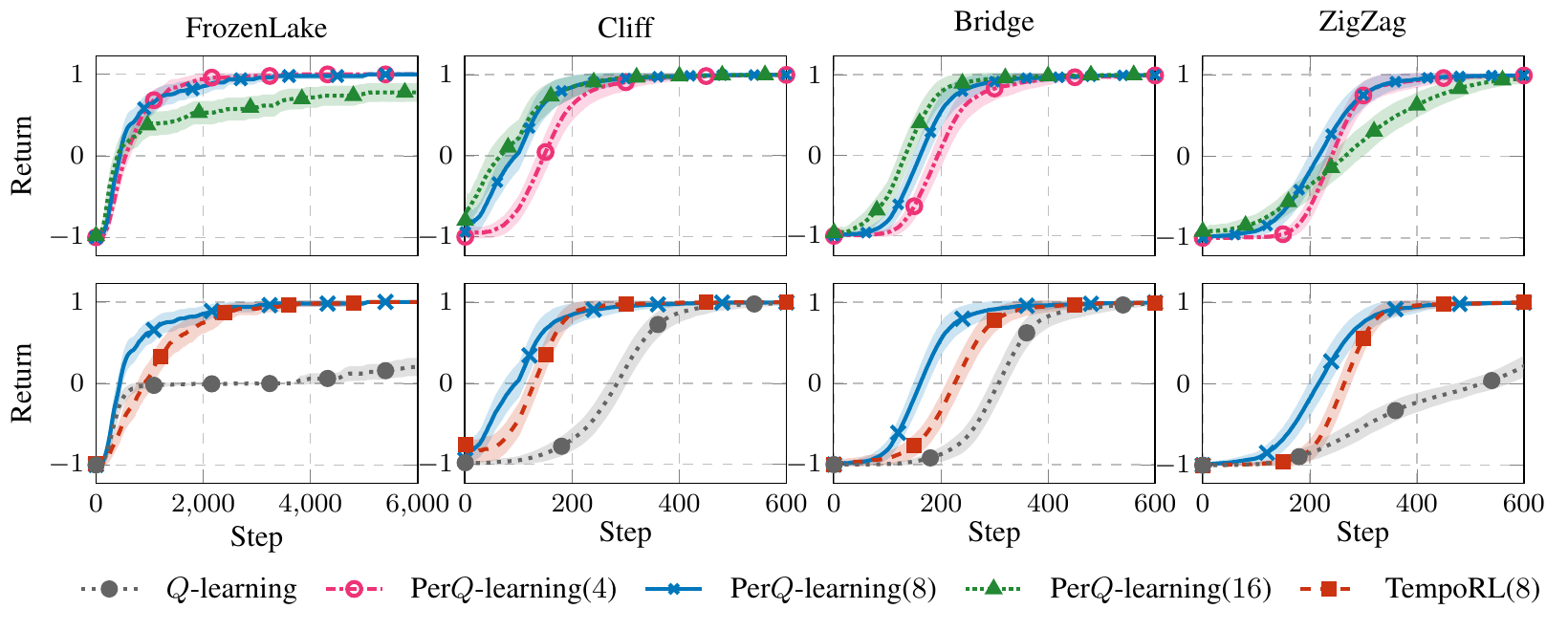}
\caption{Results on tabular environments. Top row: performances with different maximum persistences. In the legend, parenthesis denote the selected $K_{\max}$. Bottom row: \algnameQL and TempoRL comparison, $K_{\max}=8$.  50 runs (avg$\pm$ 95\% c.i.). }\label{fig:tabular_perf}\vspace*{-3mm}
\end{figure*}

\paragraph{Sample Complexity}
The second relevant effect of persistence concerns with the sample complexity. The intuition behind persistence relies on the fact that the most relevant information propagates faster through the state-action space, thanks to multi-step updates. Moreover, these updates are associated to a lower discount factor, for which it is possible to obtain better convergence rates, as proved in \cite{metelli2020control}, in which the sample complexity in a $k-$persistent MDP is reduced by a factor $(1-\gamma^k)/(1-\gamma)>1$.
In order to evaluate the sample efficiency of \algnameQL, separately from its effects on exploration, we considered a \textit{synchronous} setting \cite{kearns1999finite,sidford2018near} in a deterministic 6x6 Gridworld. At each iteration $t$, the agent has access to a set of independent samples for each state-action pair. 
In standard $Q$-learning, for each $(s,a)\in\SAs$, $Q(s,a)$ is updated. In \algnameQL, the samples are combined to obtain each possible set of $\overline{\kappa}$-persistent transitions, i.e., the tuples related to each possible $(s,a,k)\in\Sspace\times\mathcal{O}$, with $K_{\max}=6$; finally, the persistent $Q$ function is updated. 

In Figure~\ref{fig:sync} left, we compare the $L_{\infty}$ error of $Q$-learning estimating $Q^\star(s,a)$, \ie $ \max_{s,a \in \SAs }|Q_t(s,a) - Q^\star(s,a)|$, and that of \algnameQL estimating $\QstarK(s,a,k)$, \ie $\max_{s,a,k \in \Ss \times \mathcal{O}}|Q_t(s,a,k) - \QstarK(s,a,k)|$, as a function of the number of iterations $t$. We observe that, although estimating a higher-dimensional function (as $\QstarK(s,a,k)$ is a function of the persistence $k$ too), \algnameQL converges faster than $Q$-learning. In Figure~\ref{fig:sync} right, we plot the $L_{\infty}$ error experienced by \algnameQL for the different persistence options $\mathcal{O}^{(k)}$, \ie $\text{Error}_{\infty}(k) \coloneqq \max_{s,a \in \SAs}|Q_t(s,a,k)-Q^{\star}(s,a,k)|$ for $k \in \Kspace$. As expected, higher values of $k$ lead to faster convergence; consequently, the persistent Bellman operator helps improving the estimations also for the lower option sets. Indeed, we can see that also $Q_t(\cdot,\cdot,1)$, the primitive actions $Q$-function, converges faster than classic $Q$-learning (details in Appendix \ref{sec:apx sync}). 
\section{Persistent Deep Networks}\label{sec:perdq}
In this section, 
we develop the extension of \algnameQL to high-dimensional settings. Deep RL methods are becoming of fundamental importance when learning on real systems, as well as the research of methods to improve exploration and learning speed. It is straightforward to exploit Deep $Q$-Networks (DQN \cite{mnih2013playing, mnih2015human}) for learning in the options space $\mathcal{O}$. Standard DQN is augmented with $K_{\max}$ distinct sets of action outputs, to represent $Q$-value of the options space $\mathcal{O} = \mathcal{A \times K}$, while the first layers are shared, similarly to previous works~\cite{arulkumaran2016classifying, lakshminarayanan2017dynamic, Biedenkapp2021TempoRL}.
The resulting algorithm, \emph{\algnameExtDQN} (\algnameDQN) is obtained by exploiting the application of the empirical all-persistence Bellman operator. The main differences between \algnameDQN and standard DQN consist in: (i) a modified $\epsilon$-greedy strategy, which is equivalent to the one described for its tabular version; (ii) the use of \textit{multiple} replay buffers accounting for persistence.

\paragraph{Persistence Replay Buffers}
Whenever an option $o_t = (a_t, \overline{\kappa}_t)$ is executed, the partial history $H_t^{\overline{\kappa}_t}$ is decomposed in all its sub-transitions, which are used to update $Q$-values at any persistence, as shown in Section \ref{sec:perq}. The sub-transitions are stored in multiple replay buffers $\mathcal{D}_k$, one for each persistence $k\in\Kspace$. Specifically, $\mathcal{D}_k$ stores tuples in the form $(s,a_t,s',r, \overline{\kappa})$, as summarized in Algorithm \ref{alg:split_traj}, where $s$ and $s'$ are the first and the last state of the sub-transition, $r$ is the $\overline{\kappa}$-persistent reward, and $\overline{\kappa}$ is the true length of the sub-transition, which will then be used to suitably apply $\widehat{\mathcal{H}}_t^{\overline{\kappa}}$. 

Finally, the gradient update is computed by sampling a mini-batch of experience tuples from each replay buffer $\mathcal{D}_k$, in equal proportion.
Given the current network and target parametrizations $\vtheta$ and ${\vtheta}^{-}$, the temporal difference error of a sample $(s,a,r, s', \overline{\kappa})$ is computed as $ \widehat{\mathcal{H}}^{\overline{\kappa}} Q_{{\vtheta}^{-}}(s,a,k)- Q_{\vtheta}(s,a,k)$.
Our approach differs from TempoRL DQN \cite{Biedenkapp2021TempoRL}, which uses a dedicated network to learn the persistence at each state and employs a standard replay buffer, ignoring the persistence at which samples have been collected. 
\section{Experimental Evaluation}\label{sec:exps}
In this section, we show the empirical analysis of our approach on both the tabular setting (\algnameQL) and the function approximation one (\algnameDQN).

\paragraph{\algnameQL}
We present the results on the experiments in tabular environments, particularly suited for testing \algnameQL because of the sparsity of rewards. We start with the deterministic 6x10 grid-worlds  introduced by \citealt{Biedenkapp2021TempoRL}. In these environments, the episode ends if either the goal or a hole is reached, with $+1$ or $-1$ points respectively. In all the other cases, the reward is 0, and the episode continues (details in Appendix \ref{subsec:tabular_details}). Moreover, we experiment the 16x16 FrozenLake, from OpenAI Gym benchmark \cite{brockman2016open}, with rewards and transition process analogous to the previous case, but with randomly generated holes at the beginning of the episode.
\begin{figure*}[t]
\centering
\includegraphics[width=.95\textwidth]{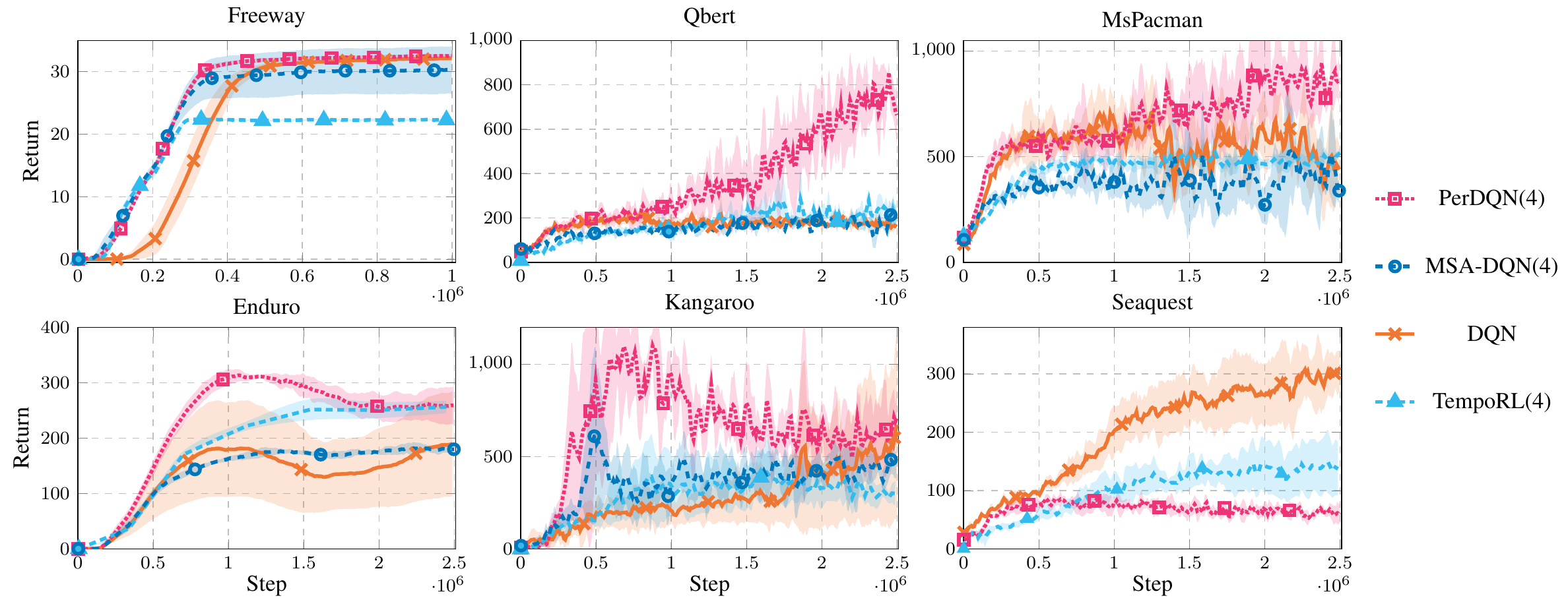}
\caption{Atari games results. Parenthesis in the legend denote the maximum persistence $K_{\max}$. 5 runs (avg$\pm$ 95\% c.i.). }\label{fig:atari}
\end{figure*}

The results are shown in Figure \ref{fig:tabular_perf}. In the top row, we compare the results on the performance when applying \algnameQL with different $K_{\max} \in\{4,8,16\}$. We can detect a faster convergence when passing from $K_{\max}=4$ to $8$. However, the largest value of $K_{\max}$ is not always the best one: while Bridge and Cliff show a slight improvement, performances in ZigZag and FrozenLake degrade. This is probably due to the nature of the environment. When there are many obstacles, high persistences might be inefficient, as the agent can get stuck or reach holes more easily. 
In the bottom plots of Figure \ref{fig:tabular_perf} we selected the results with $K_{\max}=8$, and compared them with TempoRL (with the same maximum \textit{skip-length} $J=8$) and classic Q-learning. In all cases, \algnameQL outperforms the other methods, especially Q-learning, whose convergence is significantly slower. 
exploration is very small. With this maximum persistence value \algnameQL outperforms TempoRL.
Further experiments with different values of $K_{\max}$ have been reported in Appendix \ref{sec:further_tab}, for space constraints.
In general, \algnameQL shows faster rates of improvements than TempoRL, especially in the first learning iterations. 
However, this advantage may not be consistent for every value of $K_{\max}$, and every environment, as also shown in Appendix \ref{sec:further_tab}. 

\paragraph{\algnameDQN}
Our implementation of \algnameDQN is based on OpenAI Gym \cite{brockman2016open} and Baselines \cite{baselines} Python toolkits. We start with MountainCar \cite{moore1990efficient}, as it is perhaps the most suited to evaluate the performance of persistence options. 
As shown in \cite{metelli2020control}, 1-step explorative policies usually fail to reach the goal.
 Figure \ref{fig:mtcar} shows that TempoRL and DQN cannot converge to the optimal policy, as already noticed in \cite{Biedenkapp2021TempoRL}, while \algnameDQN attains the optimal solution, that reaches the top of the mountain with the minimum loss. 

The algorithm is then tested in the challenging framework of Atari 2600 games, where we want to validate that action persistence is beneficial to speed up the initial phases of learning also in large environments.

The same architecture from \cite{mnih2013playing}, suitably modified as in Section~\ref{sec:perdq}, is used for all environments. For a fair comparison with TempoRL and standard DQN, persistence is implemented on top of the \emph{frame skip}. Thus, a one-step transition corresponds to 4 frame skips.
In Figure \ref{fig:atari} we compare \algnameDQN with TempoRL and classic DQN. In five games out of six, our \algnameDQN displays a faster learning curve thanks to its ability of reusing experience, although in some cases (e.g. Kangaroo) \algnameDQN seems to inherit the same instability issues of DQN, we conjecture due to the overestimation bias \cite{van2016deep}.
In order to better understand which beneficial effects are provided by action persistence alone and which ones derive from the use of bootstrap operator, we run an ablation experiment on the same tasks removing the latter one. The resulting algorithm is then similar to the Deep RL version of MSA-Q-learning \cite{schoknecht2003reinforcement}, which we called MSA-DQN. The results show that \algnameDQN always dominates over its counterpart without bootstrap. 
The crucial importance of the bootstrap operator is confirmed also in
the MountainCar setting where removing this feature causes a performance decrease, making its score comparable to TempoRL (see Appendix \ref{sec:further_dqn}).
Finally, we notice that in Seaquest
persistence seems to be detrimental for learning, as DQN clearly outperforms \algnameDQN. In this task, agents have to choose either to move or to shoot some moving targets. Persisting the shooting action, thus, may force the agent to stay still for a long time, hitting nothing. 
A possible solution could consist in the introduction of \textit{interrupting persistence}, in a similar fashion to interrupting options \cite{sutton1999between, mankowitz2014time}, which is an interesting future research direction.
\section{Discussion and Conclusions}\label{sec:conc}
In this paper, we considered RL policies that implement action persistence, modeled as \emph{persistence options}, selecting a primitive action and its duration. We defined the \emph{all-persistence} Bellman operator, which allows for an effective use of the experience collected at any time scale, as action-value function estimates can be updated simultaneously on the whole persistence set. In particular, low persistences (and primitive actions) can be updated by splitting the samples in their sub-transitions; high persistences can instead be improved by \textit{bootstrap}, i.e. by estimating the partial missing information. After proving that the new operator is a contraction, we extended classic $Q$-learning and DQN to their persistent version. The empirical analysis underlines the benefits of the new operator for exploration and estimation. Furthermore, the experimental campaign on tabular and deep RL settings demonstrated the effectiveness of our approach and the importance of considering temporal extended actions, as well as some limitations. Future research directions include the introduction of persistence interruption and techniques to overcome the overestimation bias. Furthermore, one could investigate the us of the operator in the actor-critic framework to cope with continuous actions.

\clearpage
\bibliography{biblio}
\clearpage
\onecolumn
\appendix
\setlength{\abovedisplayskip}{8pt}
\setlength{\belowdisplayskip}{8pt}

\section{Proofs and Derivations}\label{sec:apx proof}

\begin{restatable}[Decomposition of $r_k$]{lemma}{}\label{lem:r_k dec}
Let $r_k(s,a)$ the expected $k$-persistent reward. Let $k'<k$, then it holds that:
$$ r_k(s,a) = r_{k'}(s,a) + \gamma^{k'}\int_\Sspace P_{k'}(\mathrm{d}s'|s,a)r_{k-k'}(s',a) \qquad \forall (s,a) \in \SAs.$$ 
\end{restatable}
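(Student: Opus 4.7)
The plan is a direct computation starting from the definition
\[
r_k(s,a) \;=\; \sum_{i=0}^{k-1} \gamma^i \bigl((P^{\delta})^{i} r\bigr)(s,a),
\]
splitting the sum at index $k'$ (possible since $k'<k$):
\[
r_k(s,a) \;=\; \underbrace{\sum_{i=0}^{k'-1} \gamma^i \bigl((P^{\delta})^{i} r\bigr)(s,a)}_{=\,r_{k'}(s,a)} \;+\; \sum_{i=k'}^{k-1} \gamma^i \bigl((P^{\delta})^{i} r\bigr)(s,a).
\]
The first piece is $r_{k'}(s,a)$ by definition. For the tail I would reindex $j=i-k'$ so that $i$ ranges over $\{k',\dots,k-1\}$ becomes $j \in \{0,\dots,k-k'-1\}$, and use the semigroup property $(P^{\delta})^{j+k'} = (P^{\delta})^{k'}(P^{\delta})^{j}$ together with linearity of $(P^{\delta})^{k'}$ to pull $\gamma^{k'}(P^{\delta})^{k'}$ outside the sum. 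This yields
\[
\sum_{i=k'}^{k-1} \gamma^i \bigl((P^{\delta})^{i} r\bigr)(s,a) \;=\; \gamma^{k'}\Bigl((P^{\delta})^{k'} \underbrace{\textstyle\sum_{j=0}^{k-k'-1} \gamma^j (P^{\delta})^{j} r}_{=\,r_{k-k'}}\Bigr)(s,a).
\]

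The only step requiring care is identifying $\bigl((P^{\delta})^{k'} r_{k-k'}\bigr)(s,a)$ with the integral $\int_{\Sspace} P_{k'}(\mathrm{d}s'|s,a)\,r_{k-k'}(s',a)$. The plan here is to unfold $P^{\delta}(\mathrm{d}s',\mathrm{d}a'|s,a)=P(\mathrm{d}s'|s,a)\,\delta_{a}(\mathrm{d}a')$ inductively $k'$ times: because $P^{\delta}$ is action-preserving, every iterate $(P^{\delta})^{k'}$ applied to a function $f(s',a')$ evaluated along the constant action $a$ collapses to a pure state integral whose marginal on the state coordinate is exactly $(P^{\delta})^{k'-1}P(\cdot|s,a) = P_{k'}(\cdot|s,a)$. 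Applying this identity to $f = r_{k-k'}$ (which is a function of state and the persisted action $a$) delivers the claim.

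Assembling the two pieces gives
\[
r_k(s,a) \;=\; r_{k'}(s,a) \;+\; \gamma^{k'}\!\int_{\Sspace} P_{k'}(\mathrm{d}s'|s,a)\, r_{k-k'}(s',a),
\]
as required. No obstacle is expected beyond the bookkeeping of the semigroup identity for $P^{\delta}$; everything else is a rewriting of the defining sum.
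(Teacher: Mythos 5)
Your proposal is correct and follows essentially the same route as the paper's proof: split the defining sum $\sum_{i=0}^{k-1}\gamma^i\bigl((P^{\delta})^i r\bigr)(s,a)$ at $i=k'$, recognize the head as $r_{k'}(s,a)$, reindex the tail with $j=i-k'$, and use the semigroup/composition structure of the persistent kernel to factor out $\gamma^{k'}$ and a $k'$-step transition, identifying the remainder as $r_{k-k'}$. The only (cosmetic) difference is that you pull out the operator $(P^{\delta})^{k'}$ first and convert to the state integral against $P_{k'}$ at the end, whereas the paper rewrites each term as $\int_{\Sspace}P_i(\mathrm{d}s'|s,a)r(s',a)$ and then applies the decomposition $P_i = P_{k'}P_{i-k'}$ termwise; both hinge on the same action-preserving property of $P^{\delta}$.
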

\begin{proof}
From the definition of $r_k$, it holds that $r_k(s,a) = \sum_{i=0}^{k-1}\gamma^i\big((P^\delta)^ir\big)(s,a)$. Hence:
\begin{align*}
    r_k(s,a) &= \sum_{i=0}^{k-1}\gamma^i\big((P^\delta)^ir\big)(s,a) \\
    &= \sum_{i=0}^{k'-1}\gamma^i\big((P^\delta)^ir\big)(s,a) + \sum_{i=k'}^{k-1}\gamma^i\big((P^\delta)^ir\big)(s,a) \\
    &= r_{k'}(s,a) + \sum_{i=k'}^{k-1}\gamma^i\int_\Sspace P_i(\mathrm{d}s'|s,a)r(s',a) \\
    &= r_{k'}(s,a) + \sum_{i=k'}^{k-1}\gamma^i\int_\Sspace P_{k'}(\mathrm{d}s''|s,a) \int_\Sspace P_{i-k'}(\mathrm{d}s'|s'',a)r(s',a) \\
    &= r_{k'}(s,a) + \gamma^{k'}\int_\Sspace P_{k'}(\mathrm{d}s''|s,a) \sum_{j=0}^{k-k'-1}\gamma^j \int_\Sspace P_{j}(\mathrm{d}s'|s'',a)r(s',a) \\
     &= r_{k'}(s,a) + \gamma^{k'}\int_\Sspace P_{k'}(\mathrm{d}s''|s,a) r_{k-k'}(s'',a).
\end{align*}
\end{proof}

\HQcontraction*
\begin{proof}

(i)  First, we prove the contraction property:
we consider the $L_\infty$-norm applied to the state-action-persistence space $\SAKs$:
\begin{align*}
    \|\Hop^{\ksamp} & Q_1 -\Hop^{\ksamp} Q_2 \|_{\infty} = \sup_{s,a,k \in \SAKs}\left| \Hop^{\ksamp} Q_1(s,a,k)-\Hop^{\ksamp} Q_2(s,a,k) \right| \\
    &= \sup_{s,a,k \in \SAKs}\bigg|\mathbbm{1}_{k\le\ksamp}\left(\left(T^{\star}Q_1\right)(s,a,k)-\left(T^{\star}Q_2\right)(s,a,k) \right) \\ & \qquad \qquad \qquad
    +\mathbbm{1}_{k\le\ksamp}\left(\left(T^{\ksamp}Q_1\right)(s,a,k)-\left(T^{\ksamp}Q_2\right)(s,a,k) \right)\bigg| \\
    & = \sup_{s,a,k \in\SAKs}\bigg|\gamma^k\mathbbm{1}_{k\le\ksamp}\int_\Sspace \Pkint\left[\sup_{a',k' \in \mathcal{A\times K}} Q_1(s',a',k')-\sup_{a',k'\in \mathcal{A\times K}}Q_2(s',a',k')\right] \\
    & \qquad +\gamma^{\ksamp}\mathbbm{1}_{k>\ksamp}\int_\Sspace P_{\ksamp}(\mathrm{d}s'|s,a)\left[Q_1(s',a,k-\ksamp)-Q_2(s',a,k-\ksamp)\right]\bigg|\\
    & \le \sup_{s,a,k\in\SAKs}\bigg\{\gamma^k\mathbbm{1}_{k\le\ksamp}\int_\Sspace \Pkint\bigg|\sup_{a',k'\in \mathcal{A\times K}}Q_1(s',a',k')-\sup_{a',k'\in \mathcal{A\times K}}Q_2(s',a',k')\bigg| \\
    & \qquad + \gamma^{\ksamp}\mathbbm{1}_{k>\ksamp}\int_\Sspace P_{\ksamp}(\mathrm{d}s'|s,a)\bigg|Q_1(s',a,k-\ksamp)-Q_2(s',a,k-\ksamp)\bigg|\bigg\}\\
    %& \le \sup_{s,a,k\in\SAKs}\bigg\{\gamma^k\mathbbm{1}_{k\le\ksamp}\int_\Sspace \Pkint\bigg|\sup_{a',k'\in \mathcal{A\times K}}Q_1(s',a',k')-\sup_{a',k'\in \mathcal{A\times K}}Q_2(s',a',k')\bigg| \\
    %& \qquad +\gamma^{\ksamp}\mathbbm{1}_{k>\ksamp}\int_\Sspace\Psampint\bigg|Q_1(s',a,k-\ksamp)-Q_2(s',a,k-\ksamp)\bigg|\bigg\}\\
    & \le \sup_{s,a,k\in\SAKs}\bigg\{\gamma^k\mathbbm{1}_{k\le\ksamp}\int_\Sspace \Pkint\sup_{\widetilde{a},\widetilde{k}\in \mathcal{A\times K}}\bigg|Q_1(s',\widetilde{a},\widetilde{k})-Q_2(s',\widetilde{a},\widetilde{k})\bigg| \\
    & \qquad + \gamma^{\ksamp}\mathbbm{1}_{k>\ksamp}\int_\Sspace\Psampint\sup_{\widetilde{s},\widetilde{a}\in \mathcal{A\times K}}\bigg|Q_1(\widetilde{s},\widetilde{a},k-\ksamp)-Q_2(\widetilde{s},\widetilde{a},k-\ksamp)\bigg|\bigg\}\\
    & \le \sup_{s,a,k\in\SAKs}\bigg\{\gamma^k\mathbbm{1}_{k\le\ksamp}\int_\Sspace \Pkint\sup_{\widetilde{s},\widetilde{a},\widetilde{k}\in\SAKs}\bigg|Q_1(\widetilde{s},\widetilde{a},\widetilde{k})-Q_2(\widetilde{s},\widetilde{a},\widetilde{k})\bigg| \\
    & \qquad +\gamma^{\ksamp}\mathbbm{1}_{k>\ksamp}\int_\Sspace\Psampint\sup_{\widetilde{s},\widetilde{a},\widetilde{k}\in\SAKs}\bigg|Q_1(\widetilde{s},\widetilde{a},\widetilde{k})-Q_2(\widetilde{s},\widetilde{a},\widetilde{k})\bigg|\bigg\}\\
    & \le \sup_{s,a,k\in\SAKs}\bigg\{\bigg(\gamma^k\mathbbm{1}_{k\le\ksamp} +\gamma^{\ksamp}\mathbbm{1}_{k>\ksamp}\bigg)\|Q_1-Q_2\|_\infty\bigg\}\\
    & = \|Q_1-Q_2\|_\infty \sup_{k \in \Kspace} \gamma^{\min\{k,\ksamp\}} =\gamma\|Q_1-Q_2\|_\infty.
\end{align*}

(ii): Since the contraction property holds, and being $(\SAKs, d_\infty)$ a complete metric space (with $d_\infty$ being the distance induced by $L_\infty$ norm), the Banach Fixed-point theorem holds, guaranteeing convergence to a unique fixed point. 

We now show that $\QstarK$ is a fixed point of $T^{\star}$. We first need to define the extended Bellman expectation operators $\Tpsi: \mathscr{B}(\Sspace \times \mathcal{O}) \rightarrow \mathscr{B}(\Sspace \times \mathcal{O})$ with $f \in \mathscr{B}(\Sspace \times \mathcal{O})$ and $\psi\in\Psi$:\footnote{The following can be extended without loss of generality to a continuous action space.}
\begin{align*}
    (\Tpsi f)(s,a,k) &= r_k(s,a)+\gamma^k\int_\Sspace \PK(\mathrm{d}s'|s,a) V(s'), \\
    V(s) &=\sum_{(a,k)\in\mathcal{O}} \psi(a,k|s)f(s,a,k)
\end{align*}
As with standard Bellman operators, it trivially holds that $\Tpsi Q^\psi=Q^\psi\ \forall\psi\in\Psi$.
Thus, we can take into account the definition of value function $V^\psi$ of a policy $\psi$ and the standard Bellman Equations:
\begin{equation}
\begin{aligned}
    Q^\psi(s,a,k) &= r_k(s,a)+\gamma^k \int_\Sspace \Pkint V^\psi(s') \\
    V^\psi(s)& = \sum_{(a,k)\in\mathcal{O}} \psi(a,k|s)Q^\psi(s,a,k)
\end{aligned}
\end{equation}
Following the same argument as in \cite{puterman2014markov}, it holds that the optimal operator $T^{\star}$ improves the action-value function, i.e. $T^\star Q^\psi \ge Q^\psi$, and consequently $\QstarK$ is the (unique) fixed point for $T^{\star}$, i.e., $T^{\star}\QstarK=Q^{\star}$ by contraction mapping theorem.

Moreover, it holds that $T^{\ksamp}Q^\psi = Q^\psi$:
\begin{align}
    &(T^{\ksamp}Q^\psi)(s,a,k) = r_{\ksamp}(s,a) + \gamma^{\ksamp}\int_\Sspace P_{\ksamp}(\mathrm{d}s'|s,a) Q^\psi(s',a,k-\ksamp) \nonumber\\
    &= r_{\ksamp}(s,a) + \gamma^{\ksamp}\int_\Sspace \Pkint\bigg[r_{k-\ksamp}(s',a)+\gamma^{k-\ksamp}\int_{\Sspace}P_{k-\ksamp}(\mathrm{d}s''|s',a)V^\psi(s')\bigg]\nonumber\\
    &= \underbrace{r_{\ksamp}(s,a) + \gamma^{\ksamp}\int_\Sspace \Pkint r_{k-\ksamp}(s',a)}_{r_k(s,a)} + \gamma^k \int_\Sspace P_{\ksamp}(\mathrm{d}s'|s,a)\int_\Sspace P_{k-\ksamp}(\mathrm{d}s''|s',a)V^\psi(s'')\label{r_dec}\\
    &= r_k(s,a) + \gamma^k \int_\Sspace \Pkint V^\psi(s')=Q^\psi(s,a,k)\nonumber,
\end{align}
where in Equation \eqref{r_dec} we used Lemma \ref{lem:r_k dec}.

In conclusion,
\begin{align*}
    \Hop^{\ksamp}\QstarK &= \bigg( \mathbbm{1}_{k\le\ksamp}T^{\star}+\mathbbm{1}_{k>\ksamp}T^{\ksamp}\bigg)\QstarK \\
    &=\mathbbm{1}_{k\le\ksamp}T^{\star}\QstarK+\mathbbm{1}_{k>\ksamp}T^{\ksamp}\QstarK\\
    &=\mathbbm{1}_{k\le\ksamp}\QstarK+\mathbbm{1}_{k>\ksamp}\QstarK=\QstarK.
\end{align*}

(iii) We provide the proof of monotonic property: given $(s,a)\in\SAs$, and given $k\le k'$, we have:
\begin{align*}
    \QstarK(s,a,k) &= (T^{\star}\QstarK)(s,a,k) \\
    &= r_k(s,a) +\gamma^k \int_\Sspace P_k(\mathrm{d}s'|s,a)\max_{\overline{a},\overline{k} \in \mathcal{A \times K}}\QstarK(s',\overline{a},\overline{k} )\\
    &\ge r_k(s,a) +\gamma^k \int_\Sspace P_k(\mathrm{d}s'|s,a)\QstarK(s',a,k'-k)\\
    &= T^k\QstarK(s,a,k')=\QstarK(s,a,k').
\end{align*}
\end{proof}

\begin{restatable}[Equivalence of $\QstarK$ and $Q^\star$]{coroll}{}\label{cor:apx_Qstar_equiv}
For all $(s,a)\in\SAs$, the optimal action-value function $Q^\star$ and the optimal option-value function restricted to the primitive actions coincide, i.e. $$ \QstarK(s,a,1) = Q^\star(s,a)$$
\end{restatable}
\begin{proof}
Trivially, $\QstarK$, defined on $\Psi$, coincides with the classic $Q^\star$ defined on the \textit{primitive} policies $\pi\in\Pi$: in a first instance, we remark that $\Pi \subset \Psi$, hence all the policies defined on the space of primitive actions belong to the set of persistent policies. Furthermore, we can consider property (iii) of Theorem \ref{prop:HQ_contraction}. As a consequence $\QstarK(s,a,1)\ge Q^{\star}(s,a,k) \forall k\in\Kspace$, i.e., for each state $s\in\Sspace$ there is at least one optimal primitive action $a\in\Aspace$ which is optimal among all the option set $\mathcal{O}$. Consequently, the two optimal action-value functions coincide.
\end{proof}

\clearpage
\section{Further clarifications}\label{sec:apx scheme}
\subsection{\algnameQL Update scheme}
In Figure \ref{fig:all_pers_update} we present a scheme that shows an examples of the update order for Algorithm \ref{alg:PerQ}.

\begin{figure}[H]
\centering
\includegraphics[width=.5\columnwidth]{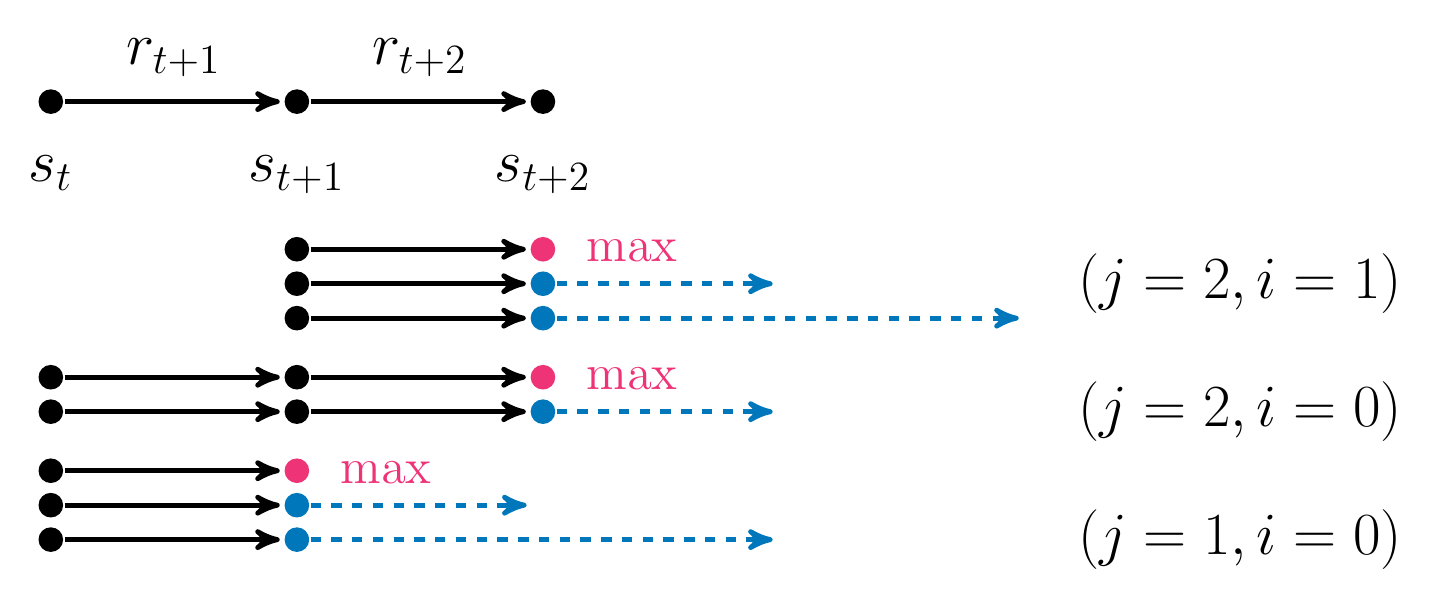}
\caption{An example of the update order for Algorithm \ref{alg:pers_update} with $\ksamp_t=2$ and $K_{\max}=3$. 
Applications of $\widehat{T}^\star_t$ and $\widehat{T}^{\ksamp_t}_t$ are denoted, respectively, by magenta and blue nodes, while dashed arrows represent the bootstrap persistence.}
%The lower persistence used to compensate the remaining steps with bootstrap is represented by the lengths of blue dashed arrows.} 
% \caption{An example of the application of Algorithm \ref{alg:pers_update} with $\ksamp_t=2$ and $K_{\max}=3$. 
% Rows from top to bottom represent the updates in order. 
% %The indices of the outer loops are specified on the right. 
% Updates involving the application of $\widehat{T}^\star_t$ and $\widehat{T}^{\ksamp_t}_t$ are denoted, respectively, by magenta and blue nodes, while dashed arrows represent the bootstrap persistence.}
% %The lower persistence used to compensate the remaining steps with bootstrap is represented by the lengths of blue dashed arrows.} 
\label{fig:all_pers_update}
\end{figure}
\clearpage
\section{Details on Persistent Markov Chains and Kemeny constant}\label{sec:apx kemeny}
In this section, we provide more details regarding how to obtain the transition Kernel implied from a persistent random variable acting on the environment, and how to compute its Kemeny's constant.

Consider an agent where actions are sampled from a generic policy $\pi(\cdot|s)=:\pi$ on the action space $\Aspace$, independent from the current state, and where persistence is sampled from a discrete distribution $\omega$  with support in  $\{1,\dots, K_{\max}\}$, independent from $\pi$. They constitute the policy $\psi$ over persistence options. The $k$-step Transition Chain induced by $\pi$ over the state space $\Sspace$ is defined as  $P_k^\pi(s'|s) = \int_{\Aspace}P_k(s'|s,a)\pi(da|s)$. This is equivalent to the Markov chain induced by $\pi$ in the $k$-step MDP, where the control frequency is set to $k$ times the base duration $\delta$. 
We now consider the transition probability induced by the joint probability distributions $\pi$ and $\omega$ up to a maximum of $K_{\max}$ steps, which for simplicity will here be referred as $K$.
In order to define it, it is necessary to consider a fixed horizon $H$: when the total number of steps in the trajectory reaches the horizon, then the (eventual) persistence is stopped. This means that, if we start for example at the $H-j$ step, the probability of persisting $j$ times the sampled action is equivalent to $\sum_{i\ge j}\omega_i$. This assumption is necessary for the Markov condition to hold.
As a consequence, we define $\widetilde{\omega}_{j} = \big\{ \widetilde{\omega}_{i,j}\big\}_i$ as a reduced distribution of $\omega$ to $j$ steps:
\begin{equation*}
\widetilde{\omega}_{i,j}\coloneqq \begin{cases} \omega_i & \text{ if } i<j \\
\sum_{i=j}^{K} \omega_i& \text{ otherwise}
 \end{cases}.
\end{equation*}

Finally we can recursively define the transition probability in $H$ steps, induced by $\pi$ and $\omega$ as:
\begin{equation}\label{eq:P_pers}
    \mathbb{P}^{\pi,\omega}_{H} \coloneqq \sum_{k=1}^{K}\widetilde{\omega}_{k,H\wedge K}\mathbb{P}^{\pi, \widetilde{\omega}_{H-k}}_{H-k}P_k^{\pi},
\end{equation}
where $\mathbb{P}^{\pi,\omega}_0 = \mathbbm{1}_{\Sspace\times\Sspace}$ and $ a\wedge b=\min\{a,b\}$. 
Equation \eqref{eq:P_pers} is not trivial and needs some clarifications. Let's consider an example, where $K=4$ and $H=3$. In this case the persistence distribution is $\omega=\{\omega_1,\omega_2,\omega_3,\omega_4\}$. 
\begin{itemize}
    \item With probability $\omega_3+\omega_4$, the sampled persistence is equal to 3, and the related transition is $P_{3}^\pi$ (since $H=3$, sampling persistence 4 leads to repeat the action for 3 times);
    \item  With probability $\omega_2$, the sampled persistence is equal to 2. The first two steps evolve as $P^\pi_{2}$, and the last step follows $\mathbb{P}^{\pi,\widetilde{\omega}_1}_1=P^\pi$;
    \item  With probability $\omega_1$, the action is selected only once, and at the next step it has to be sampled again and eventually persisted for two steps w.p. $\omega_2+\omega_3+\omega_4$.
\end{itemize}   
In other terms, denoting $\widetilde{\omega}_1=\omega_1$, $\widetilde{\omega}_2=\omega_2$, and $\widetilde{\omega}_3 = \omega_3 + \omega_4$:
\begin{align*}
    \mathbb{P}^{\pi,\omega}_3 =& \widetilde{\omega}_3 P^\pi_3 + \widetilde{\omega}_2 P^\pi P^\pi_2 +\widetilde{\omega}_1 [\widetilde{\omega}_1 P^\pi P^\pi + (\widetilde{\omega}_2+\widetilde{\omega}_3) P^{\pi}_2] P^\pi\\
    =& \widetilde{\omega}_3 \mathbbm{1} P^\pi_3  + \widetilde{\omega}_2 \underbrace{[(\widetilde{\omega}_1 + \widetilde{\omega}_2+\widetilde{\omega}_3) P^\pi ]}_{=\mathbb{P}^{\pi, \widetilde{\omega}_1}_1} P^\pi_2 +\\
    &+ \widetilde{\omega}_1 \underbrace{[\widetilde{\omega}_1 (\widetilde{\omega}_1 + \widetilde{\omega}_2+\widetilde{\omega}_3)P^\pi P^\pi + (\widetilde{\omega}_1+
    \widetilde{\omega}_2) P^{\pi}_2]}_{=\mathbb{P}^{\pi, \widetilde{\omega}_2}_2} P^\pi_1 \\
    =& \widetilde{\omega}_3\mathbb{P}^{\pi,\omega}_0P_3^\pi + \widetilde{\omega}_2\mathbb{P}^{\pi,\omega}_1P_2^\pi+ \widetilde{\omega}_1\mathbb{P}^{\pi,\omega}_2P_1^\pi
\end{align*}
The meaning of the modified distribution $\widetilde{\omega}$ is related to the fact that, once the trajectory evolved for $k$ steps, the remaining $H-k$ are still sampled, but when the last step $H$ is reached, then the agent stops repeating in any case.

\paragraph{Kemeny's constant computation}
\begin{figure}[t]
\centering
\includegraphics[width=.8\columnwidth]{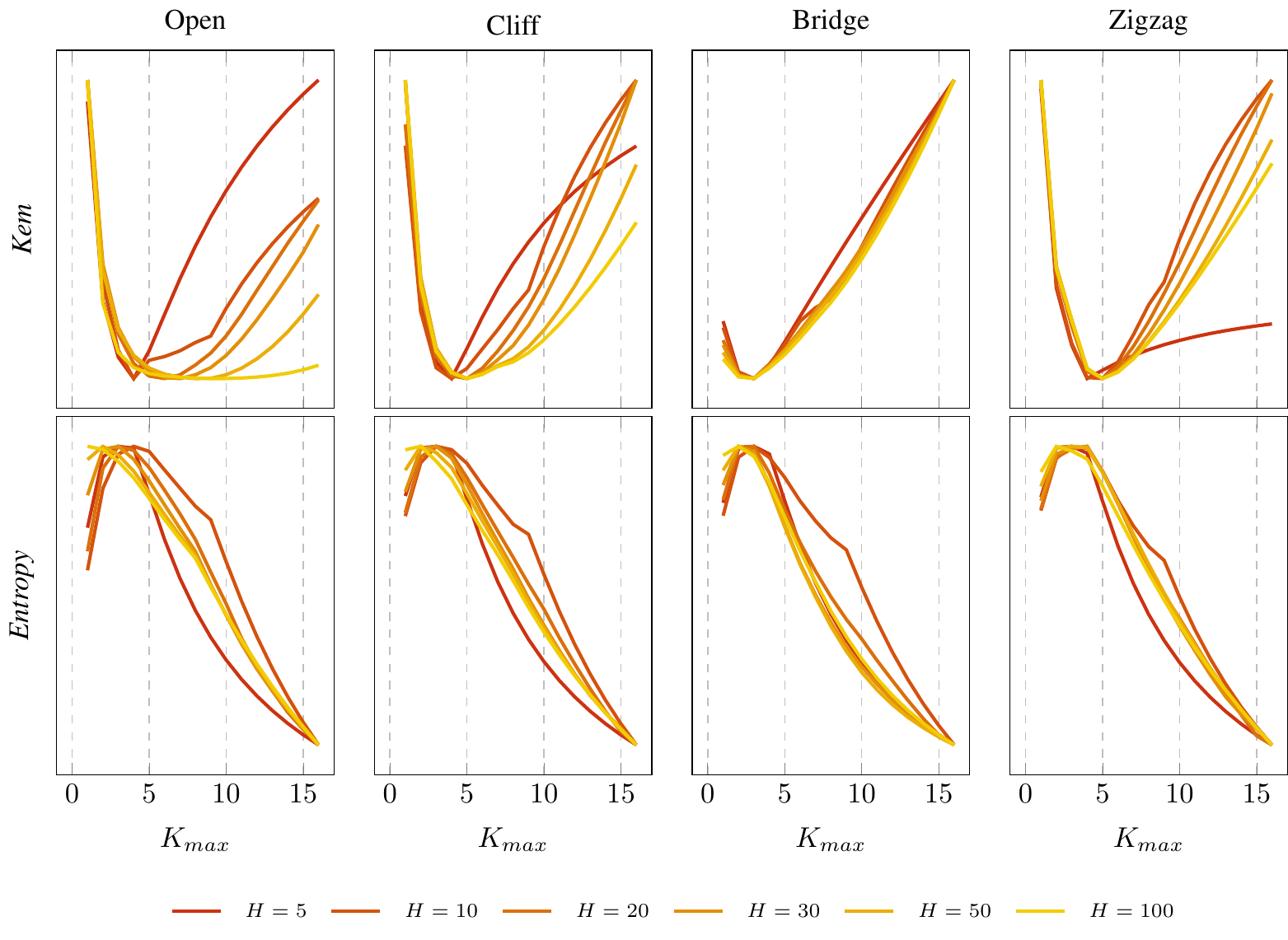}
\caption{Normalized Kemeny's constant and entropy in Tabular environments as function of the maximum persistence and horizon $H$.}\label{fig:kem_complete}
\end{figure}

The formula used to compute the Kemeny's constant from the transition Kernel $\mathbb{P}_H^{\pi,\omega}$ can be obtain thanks to the following Proposition \cite{kirkland2010fastest}.
\begin{restatable}[Kemeny's constant of an irreducible Markov Chain]{prop}{}\label{prop:kemeny}
Consider a Markov chain with an irreducible transition matrix P with eigenvalues $\lambda_1=1$ and $\lambda_i, i\in\{2,\dots,n\}$. The Kemeny constant of the Markov chain is given by
$$Kem = \sum_{i=2}^n \frac{1}{1-\lambda_i}. $$
\end{restatable}
The introduction of the parameter $H$ is necessary to retrieve an irreducible transition matrix  $\mathbb{P}^{\pi,\omega}_{H}$ maintaining the Markov property.

In order to compute the curves of Kemeny's constant in Figure \ref{fig:Kemeny}, we consider a $K_{max}$ as a variable, and exploration is performed by a discrete uniform random variable in $\mathcal{O}=\bigcup_{k\in\Kspace}\Ospace^{(k)}$ , i.e., the distribution $\pi$ is uniform over the action space $\Aspace=\{left,down, right, up\}$, and $\omega$ uniform over $\Kspace$.
In Figure \ref{fig:kem_complete} we show the curves of Kemeny's constant and entropy with different value of $K_{max}$ and $H$, and Figure \ref{fig:Kemeny} presented in Seciton \ref{sec:advantages} refers to the same Kemeny's curves selected for $H=30$.

As we can see in the figure, for each value of $H$ there is a similar pattern: increasing $K_{max}$, the related values for Kemeny's constant initially tend to decrease, indicating that persistence helps for a faster exploration through the state space. Persisting actions for long times does not help exploration, since agents might be more frequently standing in front of walls. Consequently, depending on the different design of the environments, Kemeny's values begin to increase. In the bottom plots of Figure \ref{fig:kem_complete} we can observe also the curves related to the entropy induced by $\mathbb{P}^{\pi,\omega}_{H}$: again, the maximum value of entropy is attained by $K_{max}>1$. However, the curves soon start to decrease dramatically, indicating that reaching distant states sooner is not strictly related to its visitation frequency. 
\section{Further Exploration Advantages: MountainCar}\label{sec:apx mtcar}
In this section, we provide further evidence regarding the advantages of exploration.
We study the effects of a persisted random policy on the MDP, \ie a policy $\psi \in \Psi$ over persistence options $\mathcal{O}$ in Mountain Car. 
We have collected all the states traversed by a full random agent with different values of $K_{max}$, both with a fixed persistence (Figure \ref{fig:adv_exploration_fix}) and a dynamic persistence (Figure \ref{fig:adv_exploration}). The figures represent heatmaps of the visitation frequency in the state space. As we can see, when $K_{max}$ is low the agent has less chance to reach the goal (represented by the blue dotted line). Increasing the persistence, the distribution over the states starts covering a wider region of the state space and reaching the goal with higher probability. 
\begin{figure}[t]
\centering
\includegraphics[width=.85\columnwidth]{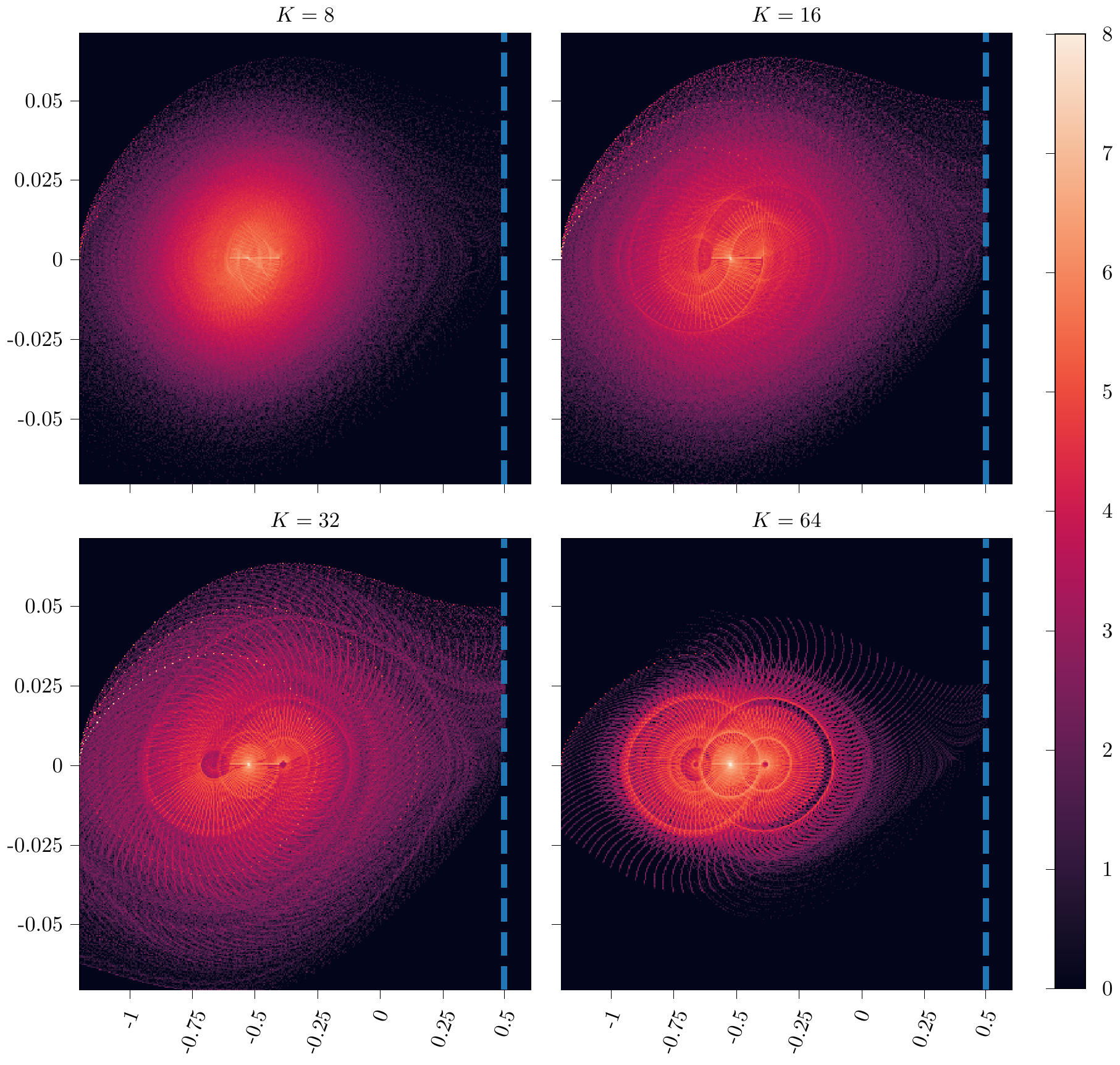}
\caption{Visitation frequencies in Mountain Car environment of different random policies with fixed $K$ persistences. The value represents in a logarithmic scale the counter of visited states. The $x$ and $y$ axes are respectively the position and the velocity of the car.  Blue dotted line represents the goal. 10.000 episodes}\label{fig:adv_exploration_fix}
\end{figure}

\begin{figure}[t]
\centering
\includegraphics[width=0.85\columnwidth]{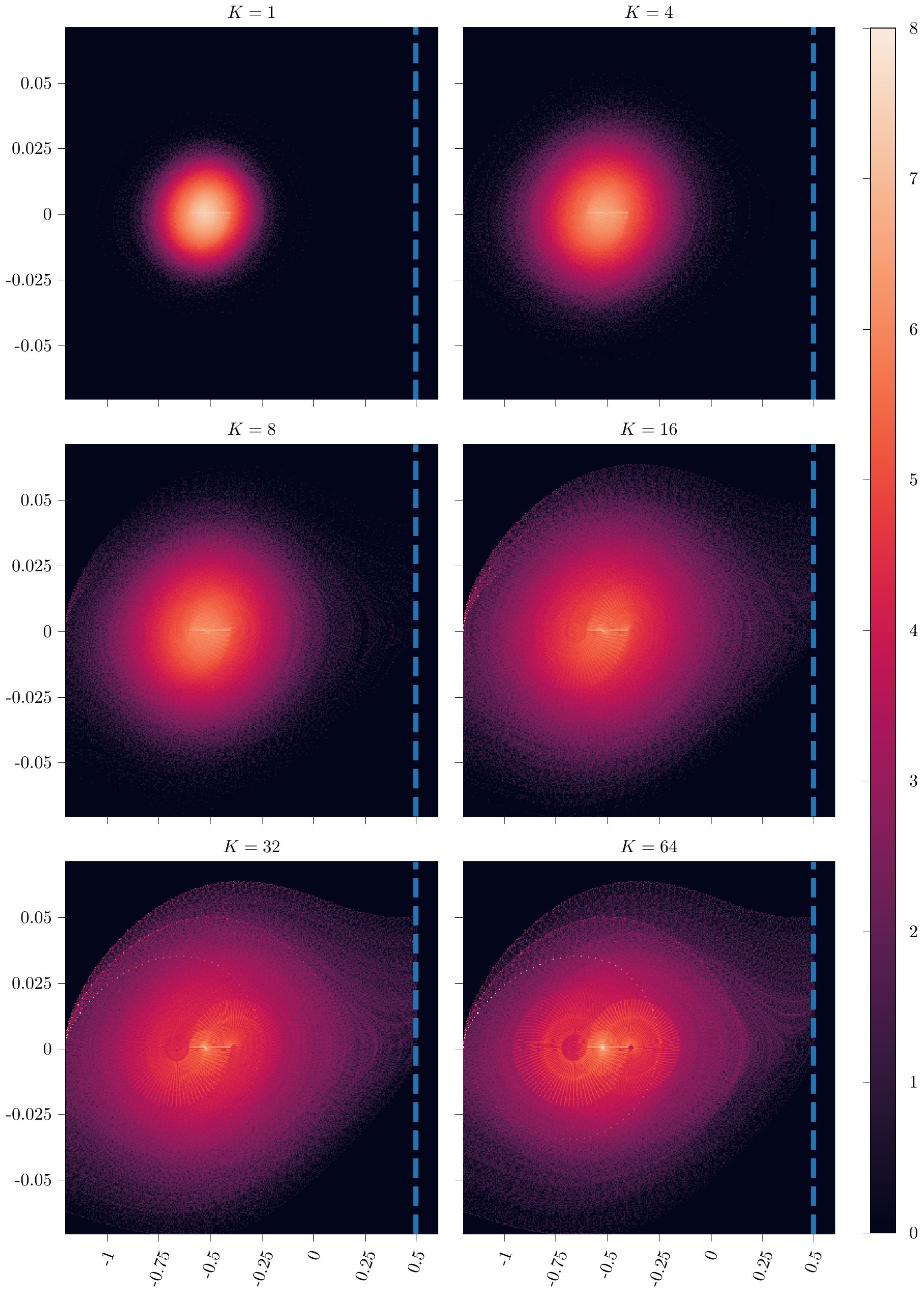}
\caption{Visitation frequencies in Mountain Car environment of different random persistence options. The value represents in a logarithmic scale the counter of visited states. The $x$ and $y$ axes are respectively the position and the velocity of the car. Blue dotted line represents the goal. 10.000 episodes}\label{fig:adv_exploration}
\end{figure}

We can observe that, even if the goal for some values of $K$ is almost equally visited, with a fixed persistence we have a different distribution of visited states. Moreover, a fixed, high persistence does not provide sufficient exploration to reach the goal, as we can see in figure \ref{fig:adv_exploration_fix} with $K=64$, especially if compared to the dynamic version in figure \ref{fig:adv_exploration}.
\clearpage
\section{Synchronous Update: additional Results}\label{sec:apx sync}
\begin{figure}
\centering
\includegraphics[width=0.16\columnwidth]{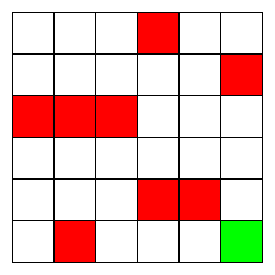}
\caption{Grid environment. Red cell denote holes, green cells the goal.}\label{fig:sync_env}
\end{figure}
\begin{figure}
\centering
\includegraphics[width=0.5\columnwidth]{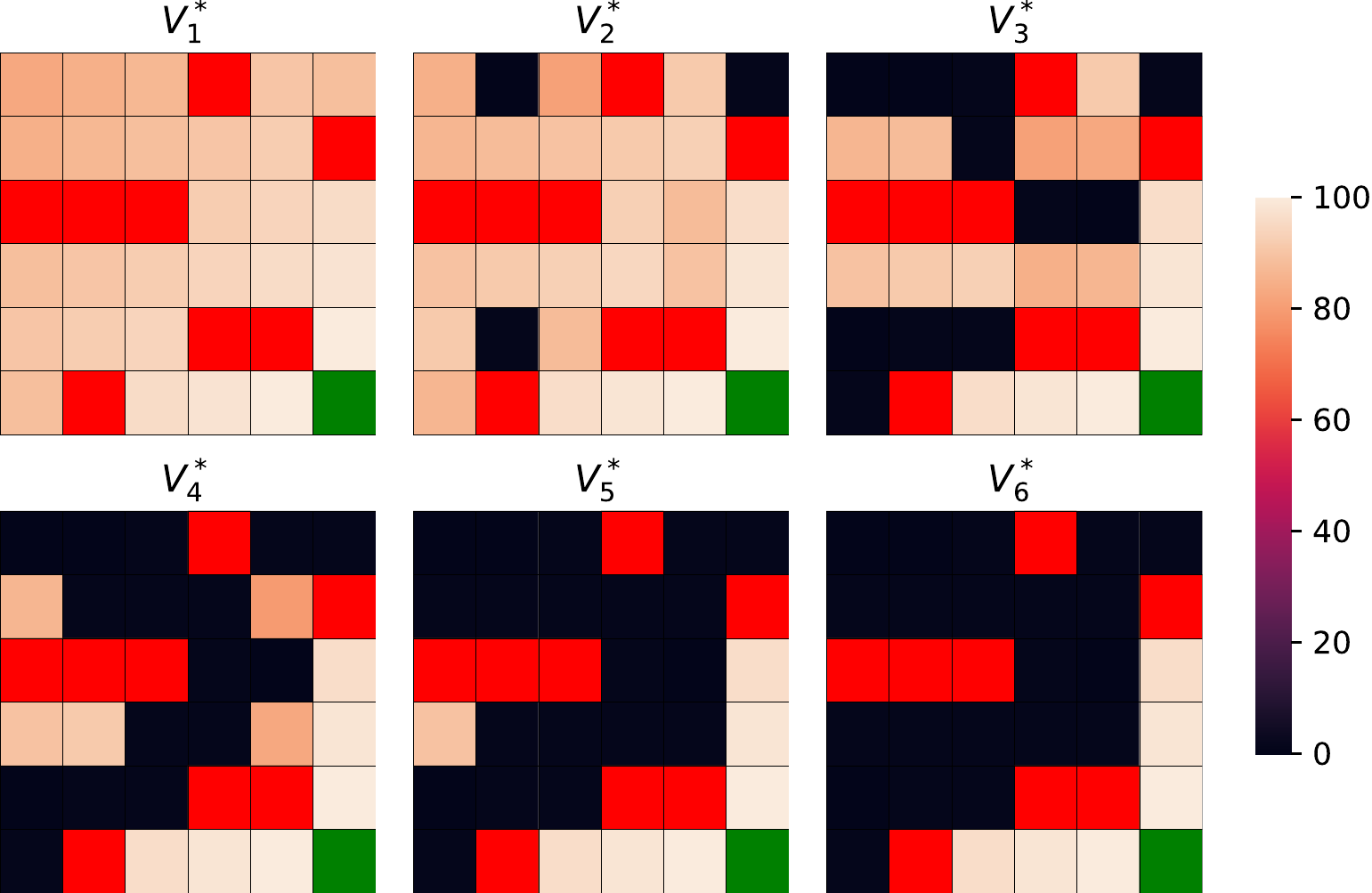}
\caption{$k$-value function representation for different persistence values $k$ ($K_{\max}=6$). Red cells denote holes, green cell the goal.}\label{fig:sync_val}
\end{figure}

In this appendix, we report some additional information related to the experiments with synchronous \algnameQL updates shown in Section \ref{sec:advantages}.

The environment considered is depicted in Figure \ref{fig:sync_env}: a 6x6 deterministic gridworld, with holes. The set of (primitive) actions is $\Aspace=\{left, down,right, up\}$. Transitions are deterministic. Going outside the borders and falling off a hole result in a punishment with a negative reward, respectively equal to -100 and -10 (hence, outer borders are not blocking the movement of the agent). The reward for reaching the goal instead is equal to +100. In all these cases the episode terminates; all other states result in a small negative reward (-1), to incentivize finding the shortest paths towards the goal. 

In order to exploit only the convergence properties of \algnameQL algorithm, without considering the exploration factor, we consider a synchronous learning framework: we assume to have access to the whole transition model, in such that, at each iteration $t$, we are able to collect an independent sample $s'\sim P(\cdot| s, a)$ for every state-action pair $(s, a) \in \SAs$. Since the environment is deterministic, this means that we have access to the whole transition matrix $P$. 
For each simulation, the value estimation for each state-action pair (or state-option pair, in the case of \algnameQL) is initialized sampling from a standard gaussian random variable. 
In each iteration of $Q$-learning, the algorithm performs a full update of the Q-function estimates.
In each iteration of \algnameQL, before performing the full update, the \textit{primitive} tuples are combined together, in order to collect a sample for each possible $(s,a,k)$ pair in $\SAKs$. 

In the plots on Figure \ref{fig:sync}, we represented the overall convergence to $Q^\star$ and the convergence of the  $Q_{\Kspace}$-value function restricted on the $k$-persistent actions $\mathcal{O}^{(k)}$, as specified in Section \ref{sec:advantages}.

The representations of the $k-$step value functions $V_k^{\star}$ are shown in Figure \ref{fig:sync_val}, where $V_k^{\star}(s)=\max_{a\in\Aspace}Q^{\star}(s,a,k)$. It is useful to remark that this value function do not coincide with the optimal value function in the $k-$persistent MDP $\mathcal{M}_k$, as $V_k^{\star}(s)$ represent the value function at the state $s$  restricted to persist $k$ times only the first action, and the following the optimal policy $\pi^{\star}$.

Parameters used for experiments:
\begin{itemize}\setlength\itemsep{.1em}
    \item Initial estimation: $Q(s,a,k)\sim\mathcal{N}(0,1)\ \forall(s,a,k)\in\SAKs$;
    \item Discount factor: $\gamma=0.99$;
    \item Learning rate: $\alpha=0.1$;
    \item Maximum number of iterations: 400 (plots truncated at 200).
\end{itemize}
\section{Details on Experimental Evaluation}\label{sec:apx exps}
In this appendix, we report more details about our experimental framework. In Appendix \ref{subsec:tabular_details} we provide more details about \algnameQL and the tabular setting; in Appendix \ref{subsec:mtcar_details} and \ref{subsec:atari_details} we focus on PerDQN and the deep RL experiments, respectively on Mountain Car and Atari games.

\paragraph{Infrastructure and computational times} The experiments have been run on a machine with two GPUs: a Nvidia Tesla K40c and a Nvidia Titan Xp with a  Intel(R) Core(TM) i5-4570 CPU @ 3.20GHz (4 cores, 4 thread, 6 MB cache ) and 32 GB RAM.

Computational times for Atari experiments: a single run of Freeway, with our infrastructure, needs around 4 hours for DQN, 5h 20 min for PerDQ and TempoRL.
The other Atari games require 10h for DQN, 12h for PerDQ and 13h 20min for TempoRL.

\subsection{Tabular Environments}\label{subsec:tabular_details}
% \vspace*{-0.3cm}
\begin{figure}[t]
\centering
\subcaptionbox{Cliff}{\includegraphics[width=.2\columnwidth]{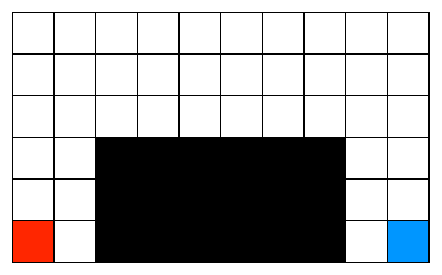}}
\subcaptionbox{Bridge}{\includegraphics[width=.2\columnwidth]{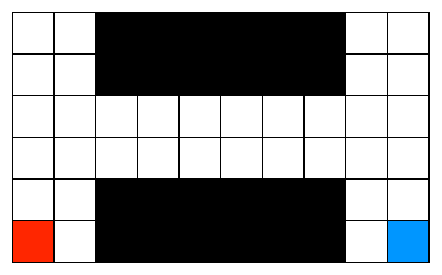}}

\subcaptionbox{Zigzag}{\includegraphics[width=.2\columnwidth]{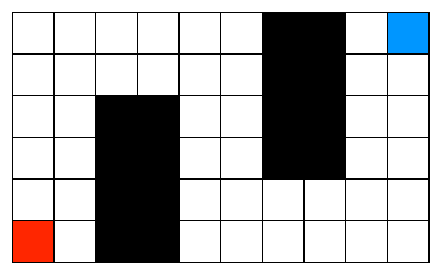}}
\caption{Tabular Gridworlds. Red cells denote the starting state and blue cells the goal state.}\label{fig:tabular}
\end{figure}

 The first environments tested are the deterministic 6x10 grids shown in Figure \ref{fig:tabular} and presented in \citealt{Biedenkapp2021TempoRL}. These environments are deterministic, and the outer borders block the agent from moving outside the grid (for example, an agent being at the top left cell will not move with an \textit{Up} action). Falling in the holes (black cells in Figure \ref{fig:tabular}) results in a $-1$ reward, while the goal is worth a positive reward, equal to $+1$. All other states have no reward. Reaching a Hole or the Goal terminates an episode.
 
Along with these three environments, we tested also the results on FrozenLake, available among OpenAi gym toolkit \cite{brockman2016open}. The transition process and the rewards are the same as in the previous case; the only differences are the bigger map (16x16) and the presence of random holes, such that each run is performed on a new map. For each new random map generated, the probability for each tile to be a hole (or frozen, according to the environment description) is equal to $0.85$.

\paragraph{Parameters:}
\begin{itemize}\setlength\itemsep{.1em}
    \item Initial estimation: $Q(s,a,k)\sim\mathcal{N}(0,1)\ \forall(s,a,k)\in\SAKs$;
    \item Discount factor: $\gamma=0.99$;
    \item Learning rate: $\alpha=0.01$;
    \item Maximum number of iterations: 6000 for FrozenLake, otherwise 600;
    \item Random policy probability: Exponentially decreasing: $\epsilon_t=0.99^t$.
\end{itemize}

\subsection{Mountain Car}\label{subsec:mtcar_details}

For Mountain Car experiments, the architecture chosen is a MLP: the first two hidden layers, consisting of 128 rectifier units, are shared among all persistences. The third hidden layer instead is diversified for each persistence value $k$, and each one is composed of 64 rectifier neurons and connected to three outputs, one for each action with its own persistence value.

The parameters adopted for the experiments are the following:
\begin{itemize} \setlength\itemsep{.1em}
    \item Discount factor: $\gamma=1$;
    \item Maximum number of iterations: $6\times 10^5$ (truncated to $5\times 10^5$ in the plots);
    \item Batch size: 32 for each persistent value;
    \item Replay buffer size: 50000 for each persistent value;
    \item Random policy probability: linearly decreasing, starting from $\epsilon_0=1$, to a final value $\epsilon_f=0.01$, reached at $15\%$ of the total number of iterations;
    \item Target update frequency: every 1000 steps;
    \item Train frequency: 1;
    \item Gradient clip: 10;
    \item Learning starts: 1000 (2000 only for Freeway);
    \item Loss function: Huber loss;
    \item Optimizer: Adam, with learning rate $\alpha=10^{-4}$, $\beta_1=0.9$, $\beta_2=0.999$;
    \item Prioritized replay buffers, of size $5\times10^4$ fro each persistence value, and default prioritization coefficients $\alpha_p=0.6$, $\beta_p=0.4$;
\end{itemize}

\subsection{Atari}\label{subsec:atari_details}
For Atari games, the architecture chosen is based on that presented in \cite{mnih2015human}, with three convolutional layers. the first hidden layer takes as input an $84\times 84\times 4$ image and convolves 32 filters of $8\times 8$ with stride 4, with the application of rectifier nonlinearities. The second has 32 input channels, 64 output channels, a kernel size of 4 and a stride of 2, again with ReLu activations, as well as the third convolutional layer, with kernel size of 3 and a stride of 1, and 64 output channels.
The convolutional structured is shaerd among all persistences, while the fully-connected hidden layer, consisting of 128 rectifier units, is differentiated for each persistence value $k$. Each one of these layers is fully-connected with the output layer, with a single output for each possible action.

The OpenAi Gym environments used are in the \textit{deterministic-v0} version (e.g. \textit{FreewayDeterministic-v0}), which do not make merging operations among the 4 input frames, but consideres only the last one.

The parameters adopted are the following:
\begin{itemize} \setlength\itemsep{.1em}
    \item Discount factor: $\gamma=0.99$;
    \item Maximum number of iterations: $2.5\times 10^6$;
    \item Batch size: 32 for each persistent value;
    \item Replay buffer size: 50000 for each persistent value;
    \item Random policy probability: linearly decreasing, starting from $\epsilon_0=1$, to a final value $\epsilon_f=0.01$, reached at $17\%$ of the total number of iterations;
    \item Target update frequency: every 500 steps;
    \item Train frequency: 1;
    \item Gradient clip: 10;
    \item Learning starts: 1000 (2000 only for Freeway);
    \item Loss function: Huber loss;
    \item Optimizer: Adam, with learning rate $\alpha=5\times10^{-4}$, $\beta_1=0.9$, $\beta_2=0.999$;
    \item Prioritized replay buffers, of size $5\times10^4$ from each persistence value, and default prioritization coefficients $\alpha_p=0.6$, $\beta_p=0.4$.
\end{itemize}

\clearpage
\section{Additional Results}
In this appendix, we report more experimental results. 
\subsection{Tabular environments}\label{sec:further_tab}
\begin{figure}
\centering
\includegraphics[width=\textwidth]{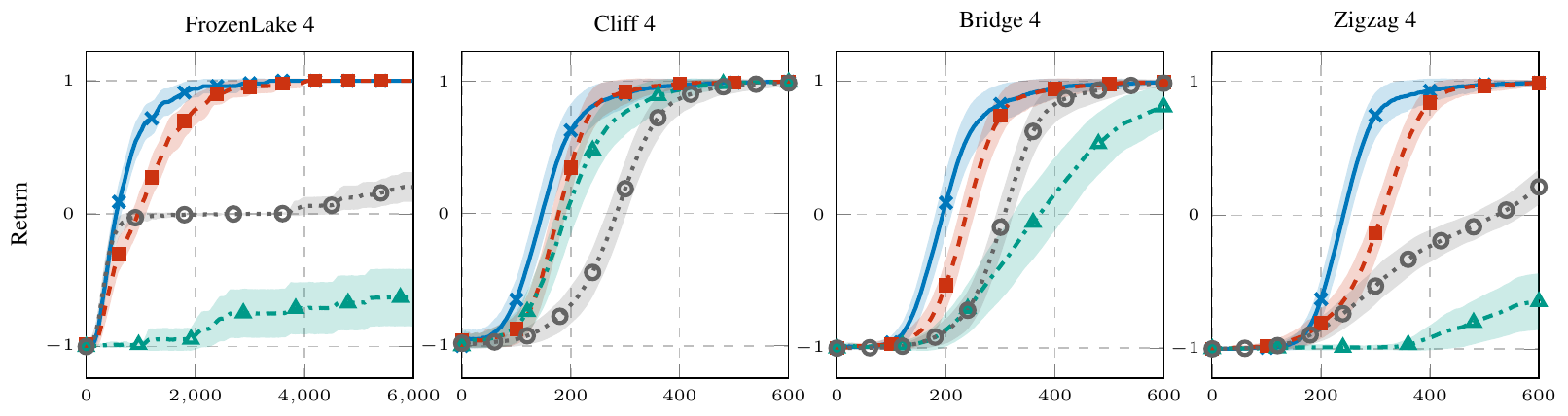}
\includegraphics[width=\textwidth]{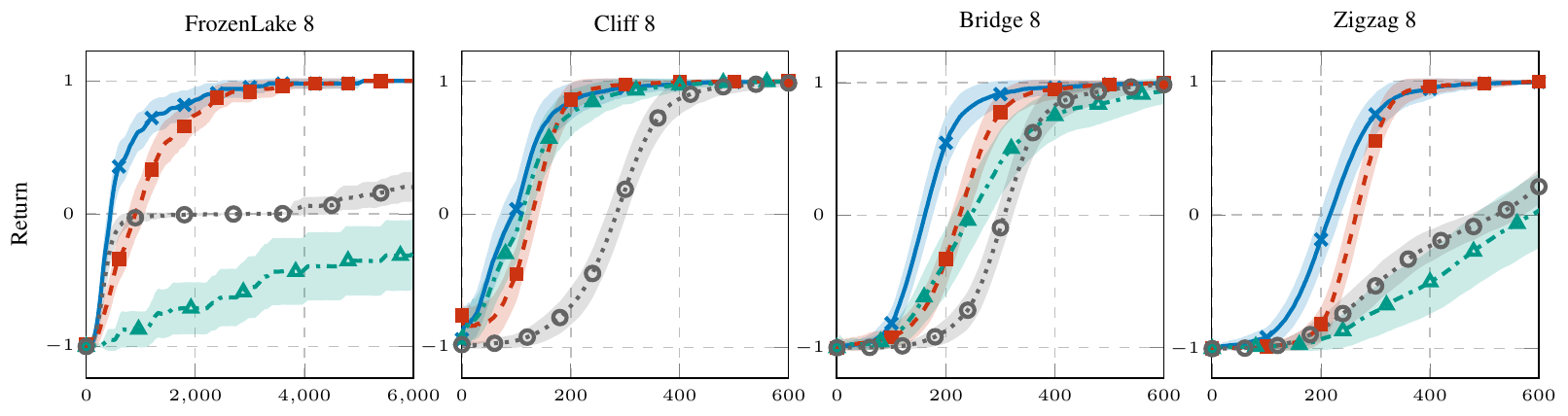}
\includegraphics[width=\textwidth]{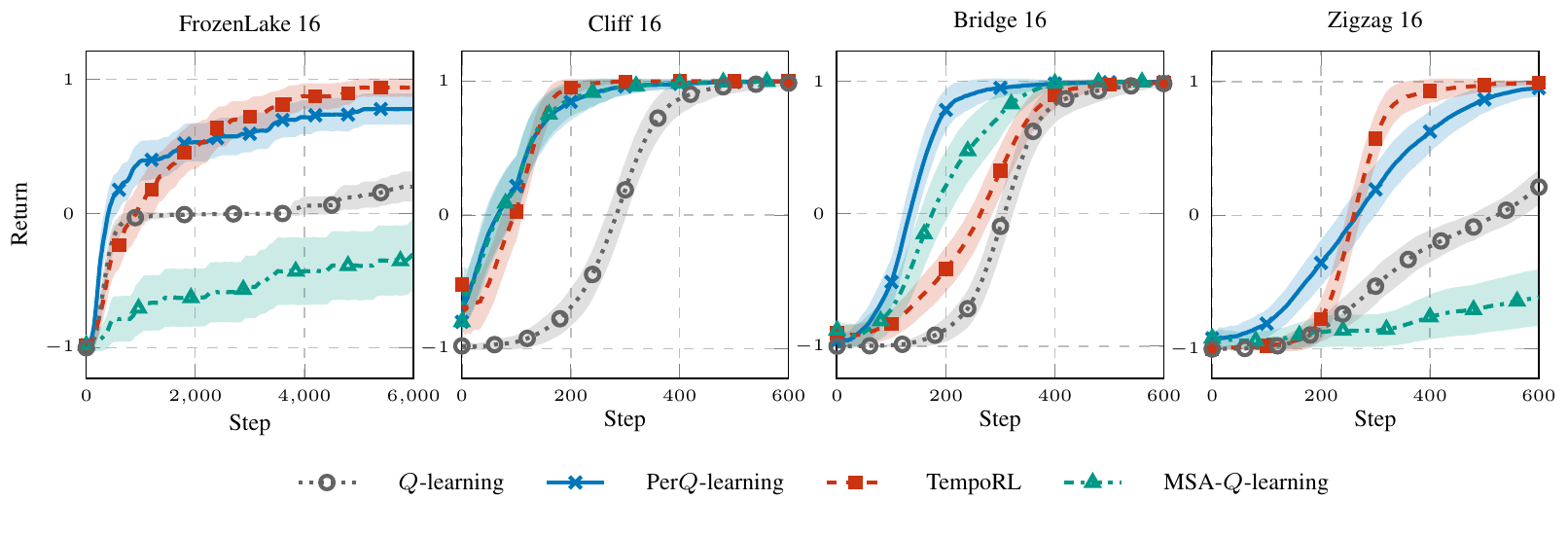}
\caption{Results of $Q$-learning, TempoRL, Per$Q$-learning and MSA-$Q$-learning   in  different tabular environments and maximum persistences. On each row, a different maximum persistence is selected for both algorithms. 50 runs (avg$\pm$ 95\% c.i.).}\label{fig:tabular_noboot}
\end{figure}

Here we consider the results related to the tabular environments. In Figure \ref{fig:tabular_noboot} we present the full comparison among \algnameQL-learning and TempoRL. As said in Section \ref{sec:exps}, we can observe a general faster convergence of the former. However, TempoRL is more robust with higher persistences, as in complex environments such as ZigZag and FrozenLake the performance does not degrade as \algnameQL learning.

Furthermore we analyzed the impact of the Bootstrap operator: as an ablation study, we performed the same \algnameQL learning evaluation without this feature. As a result, we can see that the returns are dramatically worse. Often, they perform even worse than classic Q learning: the reason of this behavior is due to the fact that, if a certain persistence value $k$ is not feasible for a state action pair $(s,a)$ (because of the geometry of the environment), its related value estimation $Q(s,a,k)$ will never be updated, and the algorithm may keep choosing it as the best one among the others. In TempoRL, albeit the absence of a Bootstrap, the update of the \textit{skip-value} function is instead updated by using the standard action-value function, improving the estimation.

\subsection{\algnameDQN additional results}\label{sec:further_dqn}
\begin{figure}
\centering
\includegraphics[width=.8\textwidth]{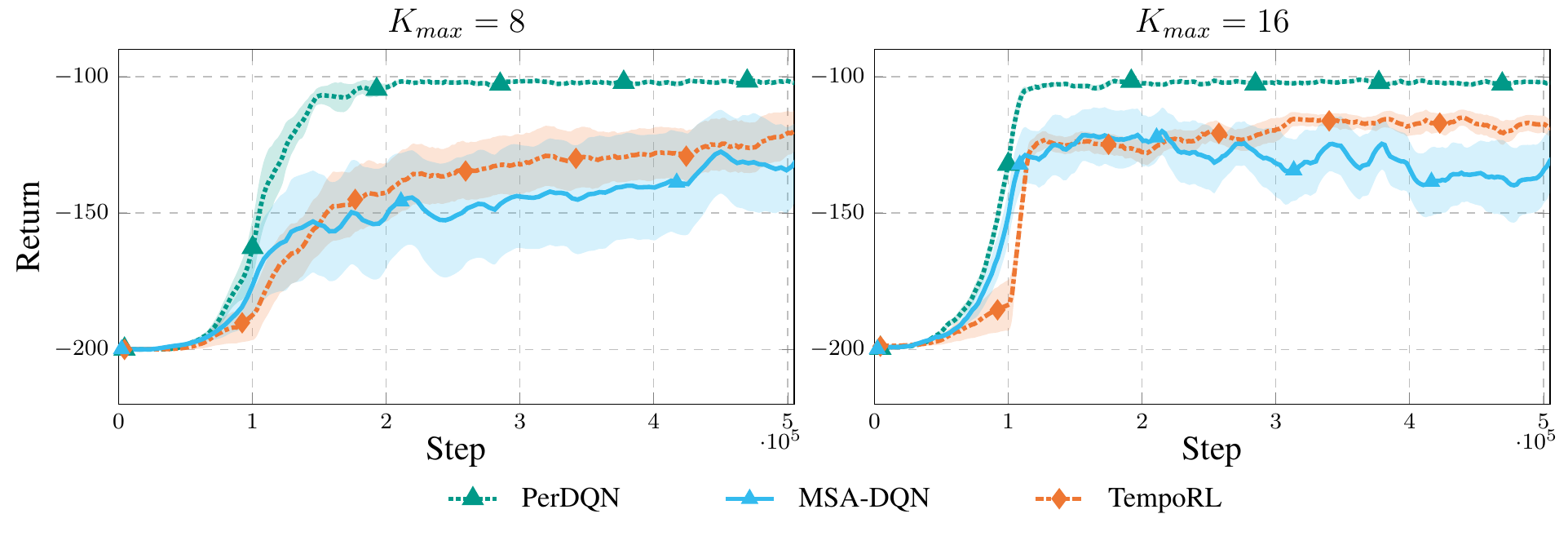}
\caption{\algnameDQN additional results on MountainCar. Return comparison of PerDQN with and without bootstrap (MSA-DQN) and TempoRL. 20 runs (avg $\pm$ 95\%c.i.)}\label{fig:mt_car_2}
\end{figure}
\begin{figure}
\centering
\includegraphics[width=.5\textwidth]{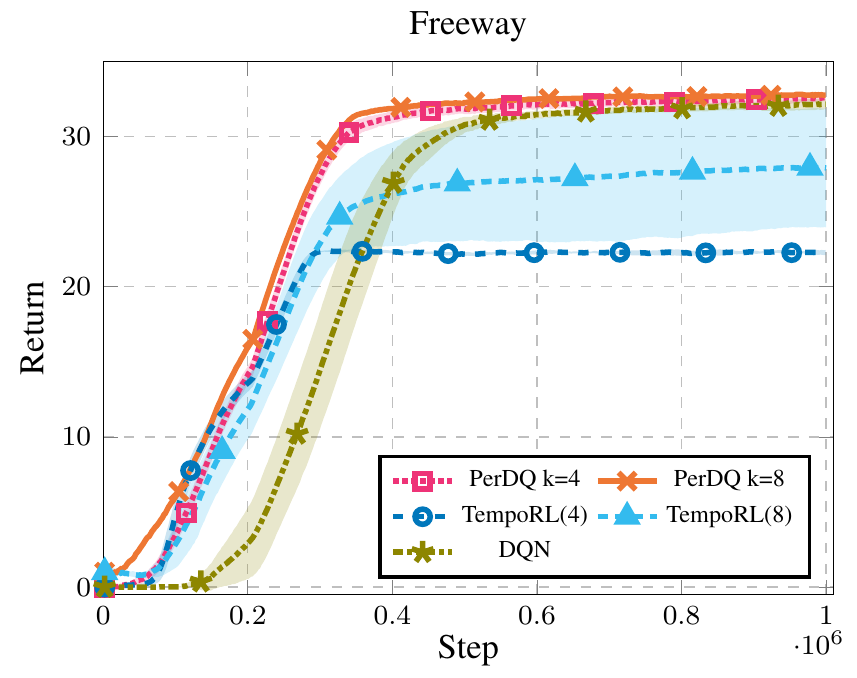}
\caption{\algnameDQN additional results on Freeway. Return comparison of PerDQN and TempoRL with maximum persistence $4$ and $8$. Parenthesis denote the maximum persistence chosen. 5 runs (avg $\pm$ 95\%c.i.)}\label{fig:freeway_extended}
\end{figure}

\begin{figure*}[t]
\centering
\includegraphics[width=\textwidth]{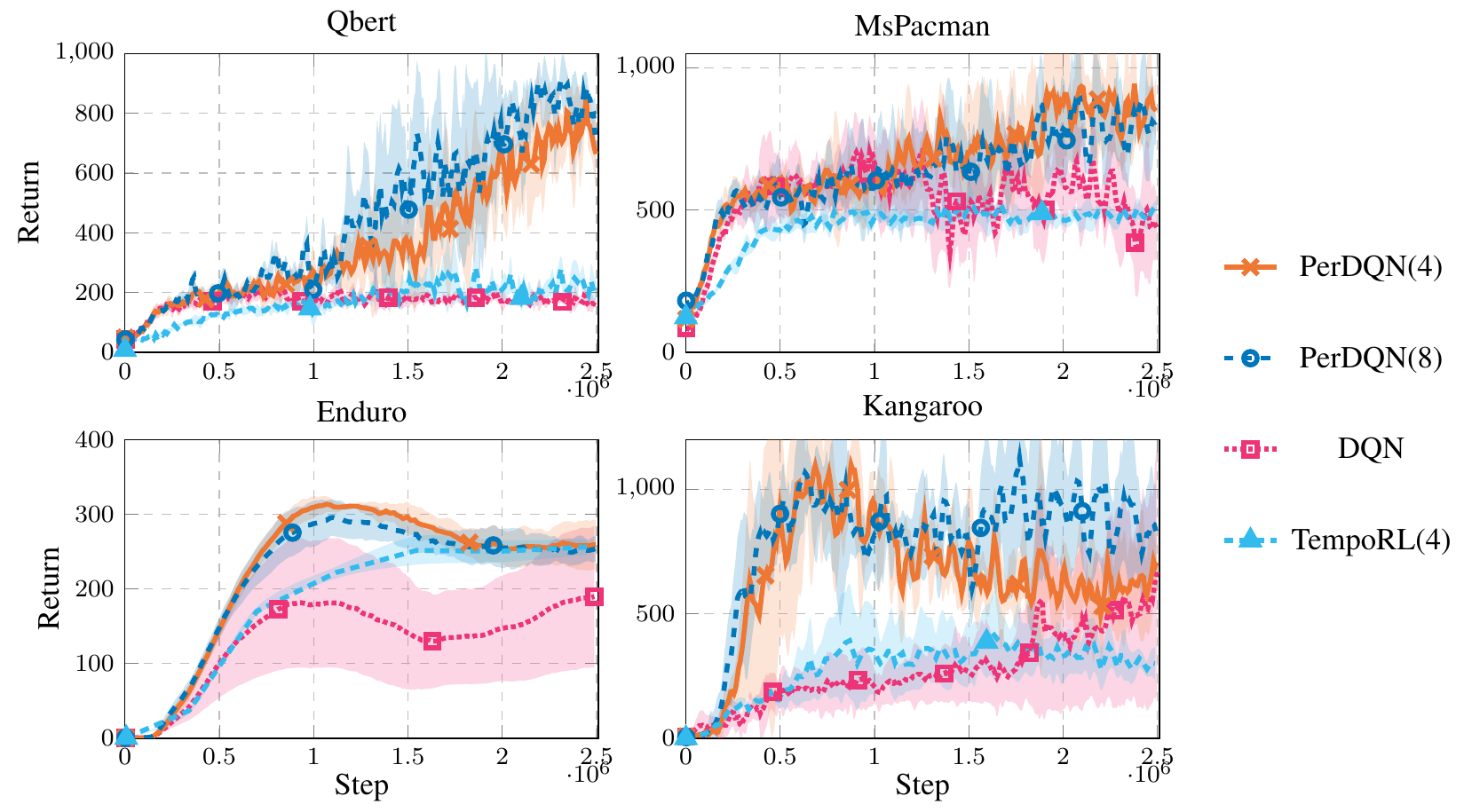}
\caption{Atari games results for DQN and PerDQN, with $K_{\max}=4$ and $8$. 5 runs (avg$\pm$ 95\% c.i.). }\label{fig:atariK}
\end{figure*}

\textbf{\algnameDQN with increased $K_{max}$ in Atari games}
Here we investigate the effects of an increased value of $K_{max}$ in Atari games: in Figure \ref{fig:freeway_extended} we show the results obtained on Freeway, by increasing the maximum persistence to 8 for both TempoRL and \algnameDQN. The same $K_{max}=8$ for \algnameDQN has been tested on the other Atari games, and shown in Figure \ref{fig:atariK} (we excluded Seaquest, as we have already shown that persistence in this environment is detrimental for learning.
As we can see, an increased maximum persistence $K_{max}=8$ does not provide improvements w.r.t. $K_{max}=4$ for the majority of the environments. A slight improvement can be seen for Freeway, while TempoRL, albeit providing a better learning curve than the one obtained with skip size equal to 4, is still unable to converge to the optimal policy.
\newline

\noindent \textbf{MountainCar, \algnameDQN without bootstrap}
Here we investigate the effects of the Bootstrap operator on \algnameDQN. In order to do so, we experimentally evaluated the algorithm without this feature on MountainCar and on the Atari games where persistence seemed to have beneficial effects (hence, with the exclusion of Seaquest environment). The results related to the Atari environments are shown in Figure \ref{fig:atari}, while MountainCar performances are shown in Figure \ref{fig:mt_car_2}: the version without bootstrap recalls the same contribution brought in \cite{schoknecht2003reinforcement} with MSA-Q-learning, hence it is here denoted as MSA-DQN. As we can see, bootstrap is an essential feature for \algnameDQN to converge rapidly to the optimal policy and to be robust. Indeed, without the bootstrap, the performances are worse, similar to TempoRL, and their variance is dramatically higher.

\subsection{\algnameQL: computational times}
\begin{figure*}[ht]
\centering
\includegraphics[width=\textwidth]{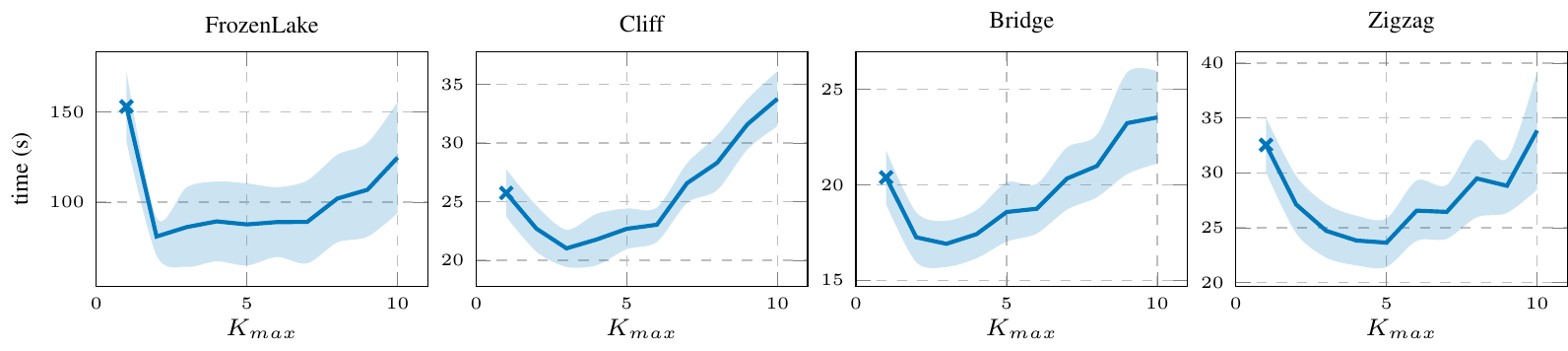}
\caption{Computational times required for a complete run of \algnameQL on tabular environments with different Maximum Persistences. 20 runs. avg $\pm$ 95\% c.i.}\label{fig:tabular_times}
\end{figure*}

We ran \algnameQL on the tabular  environments for different values of $K_{max}$ keeping track of the training time. We fixed the number of collected trajectories, but the trajectories can be of different lengths since the environments are goal-based. In Figure \ref{fig:tabular_times}, we observe that in all environments the minimum is attained by a value of $K_{max} > 1$. This means that the computational overhead due to using larger values of $K_{max}$ is compensated by a faster learning speed, leading to shorter trajectories overall. Note that $K_{max}=1$ corresponds to classic Q-learning. 
\clearpage 
\subsection{Experiments with the same number of updates and Comparison with $ez$-greedy exploration}

\begin{figure}[ht]
\centering
\includegraphics[width=\textwidth]{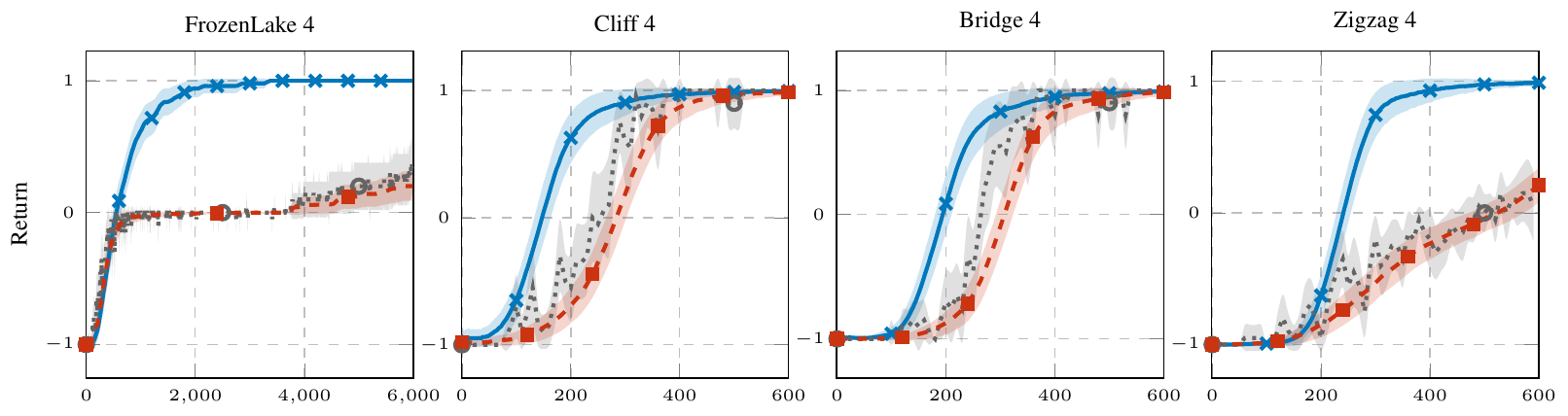}
\includegraphics[width=\textwidth]{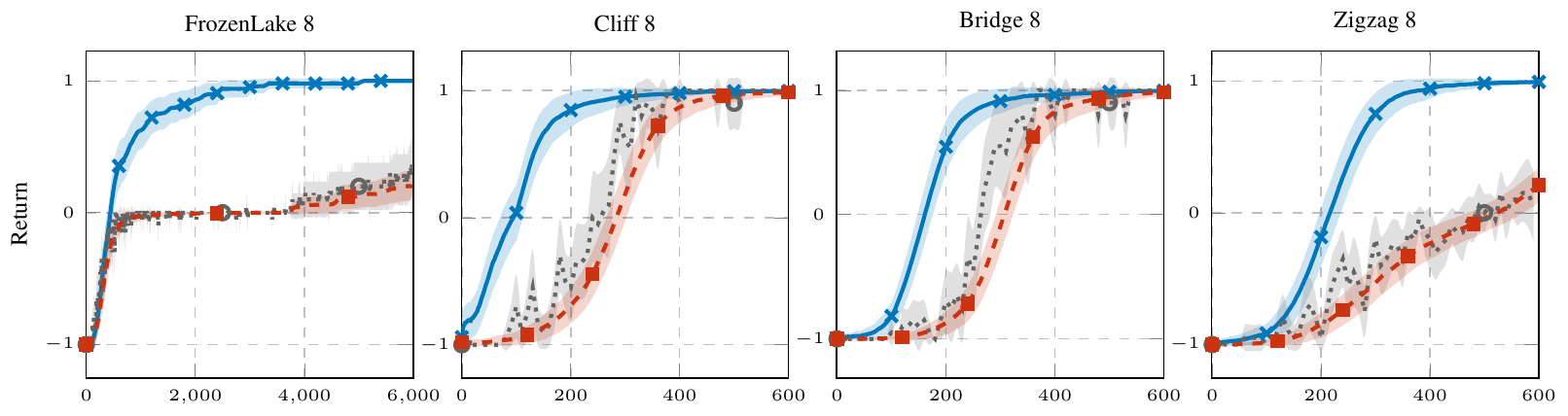}
\includegraphics[width=\textwidth]{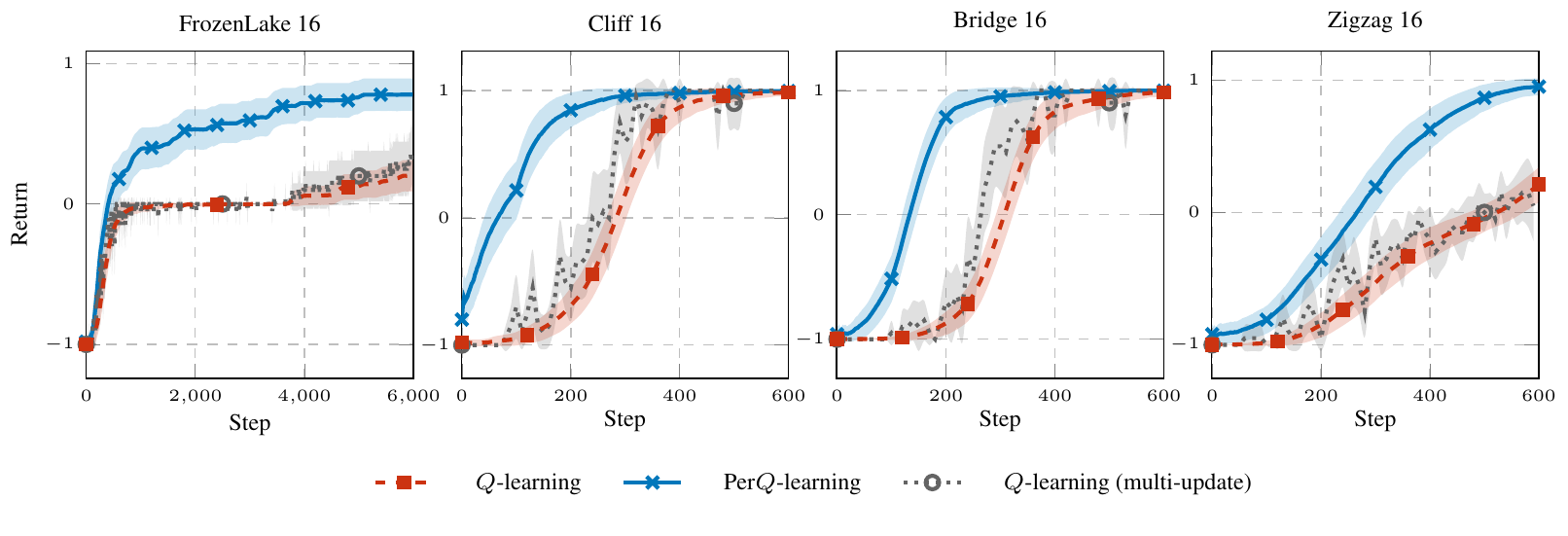}
\caption{Results of $Q$-learning, TempoRL, Per$Q$-learning and MSA-$Q$-learning   in  different tabular environments and maximum persistences. On each row, a different maximum persistence is selected for both algorithms. 50 runs (avg$\pm$ 95\% c.i.).}\label{fig:tabular_multi}
\end{figure}

\begin{figure}[ht]
\centering
\includegraphics[width=0.5\columnwidth]{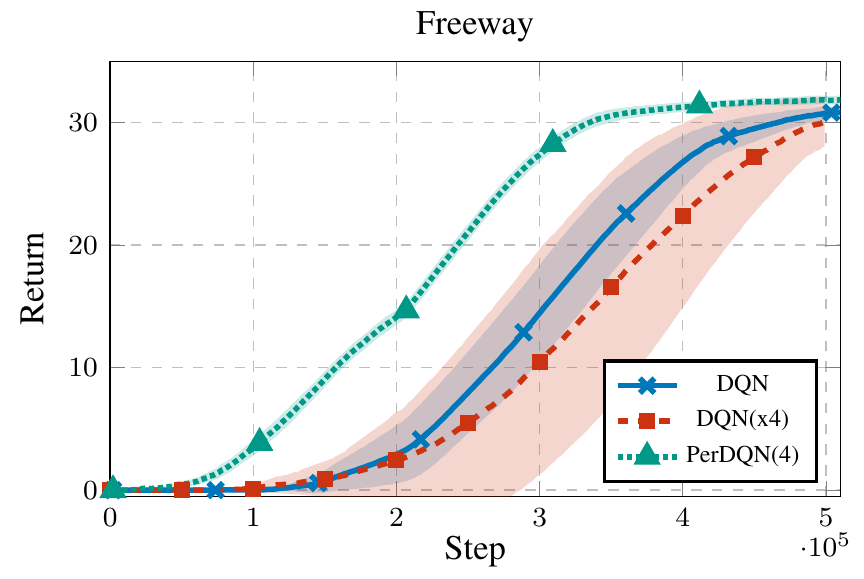}
\caption{Freeway results: \algnameDQN comparison with DQN with the same global amount of tuples sampled per update as \algnameDQN (DQN(x4) denotes DQN where the batch size is 4 times the one adopted for classic DQN, hence with the same global batch size as \algnameDQN with $K_{max}=4$).}
\label{fig:freeway_comp}
\end{figure}
\begin{figure}[ht]
\centering
\includegraphics[width=0.6\columnwidth]{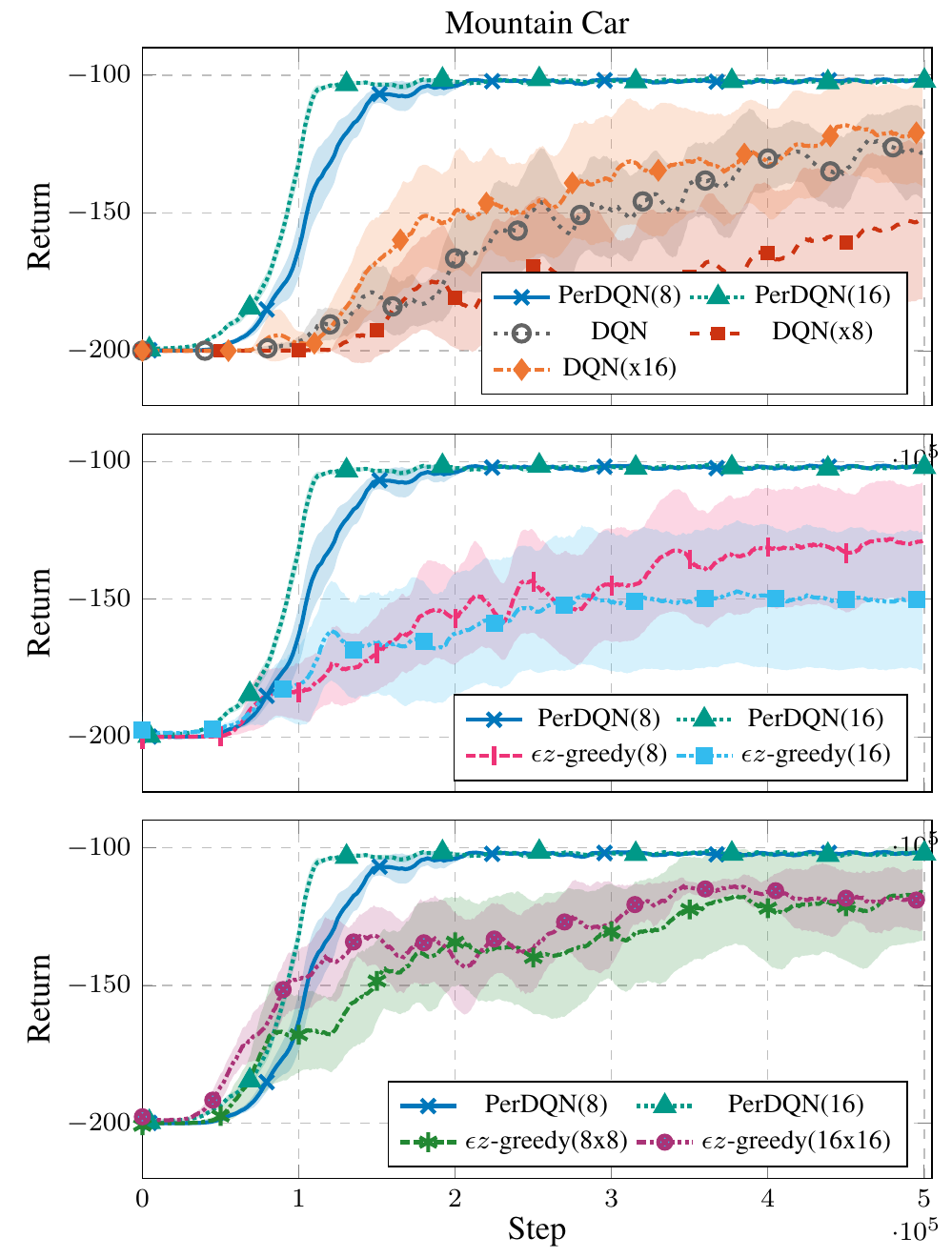}
\caption{MountainCar results for DQN, PerDQN and $\epsilon z$-greedy DQN (10 runs, avg$\pm$ 95\% c.i.). \\ Top Figure: \algnameDQN comparison with DQN with the same global amount of tuples sampled per update as \algnameDQN (e.g. DQN(x8) denotes DQN where the batch size is 8 times the one adopted for classic DQN, hence with the same global batch size as \algnameDQN with $K_{max}=8$). 
\\Middle figure:  \algnameDQN comparison with  $\epsilon z$-greedy DQN, where parenthesis denote the maximum persistence in the random sampling. 
\\Bottom figure:  \algnameDQN comparison with $\epsilon z$-greedy DQN, with the same global amount of tuples sampled per update as \algnameDQN (e.g. $\epsilon z$-greedy(8x8) denotes $\epsilon z$-greedy DQN with maximum persistence for exploration equal to 8, and the batch size is 8 times the one adopted for classic DQN, hence with the same global batch size as \algnameDQN with $K_{max}=8$)}\label{fig:mtcar_comp}
\end{figure}

In Algorithm \ref{alg:PerQ}, we can see that the number of total updates related is $\mathcal{O}(K_{max}^3)$. One might wonder if standard $Q$-learning can attain the same learning advantages with an increased number of updates: in Figure \ref{fig:tabular_multi}, we compare the results obtained for Per$Q$-learning with $Q$-learning, where the number of updates per each step has been increased for $K_{max}$ times (the row denotes the value of  $K_{max}$). Hence, the two algorithms have the same amount of updates. However, the increasing number of updates is not directly related to a learning improvement. If we consider $Q$-learning, increasing the number of updates (related to the same observed tuples) is equivalent to an increase in the learning rate, which is not necessarily related to a better learning performance. Indeed, the performances of $Q$-learning in its multiple-update version are not statistically better than the single-update version, only less robust.

Furthermore, the presence of multiple replay buffers allows to reduce the total number of updates linear in $K_{max}$ in \algnameDQN, which is the most suited for real-world scenarios with large state-spaces.   
In Figure \ref{fig:mtcar_comp}, we compare \algnameDQN with other baselines on the MountainCar environment: In the top plot, we show a comparison between our proposed approach and DQN with the same number of updates as PerDQN (denoted as DQN(x$K$)). In \algnameDQN, the batch size for each replay buffer is 32: in the new DQN runs, the total batch size for an update has been increased to 32*$K_{max}$.

In the middle plot, we compare \algnameDQN with a vanilla DQN employing $\epsilon z$-greedy exploration \cite{dabney2020temporally}. In the implementation of $\epsilon z$-greedy DQN, the exploration is performed in a similar manner as in \algnameDQN, as the persistence is sampled from a discrete uniform distribution in $1,\dots,K_{max}$, where $K_{max}$ has been set to 8 and 16.

In the bottom plot, $\epsilon z$-greedy DQN is run with the same global batch size per update as in \algnameDQN.

In Figure \ref{fig:freeway_comp}, we compare \algnameDQN ($K_{max}$=4) with DQN on Freeway, with an increased batch size of a factor 4, in such a way that the total number of samples per update is the same.

 In a similar fashion as with standard $Q$-learning, in Figure \ref{fig:mtcar_comp} the DQN curve related to $8x$ the sample size is worse (on average) than the standard version, while the $16x$ experiments see a slight improvement (with no statistical evidence). In any case, \algnameDQN outperforms all the compared methods. The same holds for Freeway (Figure \ref{fig:freeway_comp}), where augmenting the DQN batch size by a factor 4 does not provide improvements in the performances.

In the middle and bottom plots of Figure \ref{fig:mtcar_comp}, we can see that the exploration with persistence alone (through $\epsilon z$-greedy exploration) is not enough to provide the same improvement in the learning capabilities, differently from \algnameDQN. Furthermore, increasing the number of updates can slightly help learning, but the resulting learning curves are still largely dominated by \algnameDQN.

\end{document}